\documentclass[11pt,a4paper]{article}

\usepackage[margin=1.2in]{geometry}

\usepackage[T1]{fontenc}
\usepackage[utf8]{inputenc}
\usepackage[english]{babel}
\usepackage{lmodern} 

\usepackage{amsmath,amssymb,amsfonts,amsthm}
\usepackage{mathtools}
\usepackage{mathrsfs}
\usepackage{bm}

\usepackage{graphicx}
\usepackage[small]{caption}
\usepackage{subcaption}
\usepackage{booktabs}
\usepackage{float}

\usepackage[ruled,vlined]{algorithm2e}

\usepackage{tikz}
\usepackage{pgfplots}
\pgfplotsset{compat=1.18}
\usepgfplotslibrary{fillbetween}
\usetikzlibrary{positioning,shapes,decorations.pathreplacing,shapes.geometric,fit}

\usepackage[numbers,sort&compress]{natbib}
\usepackage{url}
\usepackage[hidelinks]{hyperref}
\usepackage[nameinlink,capitalize]{cleveref}

\usepackage{microtype}

\newtheorem{example}{Example}
\newtheorem{theorem}{Theorem}
\newtheorem{corollary}[theorem]{Corollary}
\newtheorem{lemma}{Lemma}
\newtheorem{definition}{Definition}
\newtheorem{proposition}{Proposition}

\newcommand{\ost}{\omega^*}
\newcommand{\odg}{{\omega_\dag}}

\newcommand{\expl}[1]{\text{\scriptsize{}#1}}

\title{Epistemic Filtering and Collective Hallucination: A Jury Theorem for Confidence-Calibrated Agents}

\author{
  Jonas Karge \\
  Technische Universit\"at Dresden \\
  \texttt{jonas.karge@tu-dresden.de}
}

\date{} 

\begin{document}

\maketitle


\begin{abstract}
We investigate the collective accuracy of heterogeneous agents who learn to estimate their own reliability over time and selectively abstain from voting. While classical epistemic voting results, such as the \textit{Condorcet Jury Theorem} (CJT), assume fixed participation, real-world aggregation often benefits from allowing agents to say ``I don't know.'' We propose a probabilistic framework where agents engage in a \textit{calibration} phase, updating beliefs about their own fixed competence, before facing a final confidence gate that determines whether to vote or abstain. We derive a non-asymptotic lower bound on the group's success probability and prove that this \textit{selective participation} generalizes the asymptotic guarantees of the CJT to a sequential, confidence-gated setting. Empirically, we validate these bounds via Monte Carlo simulations. While our results are general, we discuss their potential application to AI safety, outlining how this framework can mitigate \textit{hallucinations} in collective LLM decision-making.
\end{abstract}

\section{Introduction}

A fundamental problem in \emph{Artificial Intelligence} is the aggregation of noisy information from heterogeneous sources. When this problem is treated from an \emph{epistemic social choice} perspective, the underlying aggregation mechanism is \emph{voting}, and the objective is to identify an underlying ground truth that emerges as the collective belief.

This challenge is classically addressed by the \textit{Condorcet Jury Theorem} (CJT), which provides probabilistic guarantees that a majority of fallible agents can collectively identify the truth with high probability \cite{Condorcet}. In recent work from the field of \textit{symbolic AI}, voting generalizes \textit{belief revision} to multi-agent settings. Approaches like \textit{belief fusion} \cite{konieczny1999merging} and \textit{judgment aggregation} \cite{list2002aggregating} use voting rules to merge conflicting knowledge bases while preserving logical consistency. The CJT has been formalized in this context to track truth in incomplete information settings \cite{Everaere} and to model the convergence of non-expert opinions to a true propositional state \cite{singleton2024truth}. Moreover, in \textit{Statistical AI}, the CJT underpins \textit{ensemble methods}. Early work by \citeauthor{levin1990statistical} (\citeyear{levin1990statistical}) and \citeauthor{hansen1990neural} (\citeyear{hansen1990neural}) modeled neural network ensembles as voting systems, directly influencing the development of \textit{boosting} \cite{schapire1990design} and \textit{bagging} \cite{breiman1996bagging}. These methods rely on CJT-like diversity assumptions to ensure that an aggregate of weak learners outperforms any single learner \cite{Dietterich,Lam}. Nowadays, building upon these principles in safety-critical applications, the CJT principles are used to derive voting-based ensemble scores that improve efficiency in COVID-19 identification \cite{srivastava2022ensemble} and enhance detection of colon cancer in histopathology \cite{srivastava2023cjt}.

\vspace{-0.25cm}

\paragraph{The Condorcet Jury Theorem.}
As a foundational theorem in voting theory, the classical CJT \cite{Condorcet} assumes that agents have homogeneous competencies, are more likely to vote for the correct alternative than for the incorrect option, and independent in their decision-making. Additionally, the theorem assumes that agents choose exactly one alternative from two options under majority voting. Under these conditions, leveraging the \textit{wisdom of the crowd effect}, the classical CJT establishes that the probability of majority voting identifying the correct alternative 

(1) increases monotonically with the number of agents (non-asymptotic guarantee),  and 

(2) converges to 
1 as the number of agents approaches infinity (asymptotic guarantee).




\subsection{Our Voting Framework: Epistemic Filtering.}

Analogously to the original CJT, we model the underlying decision problem as a binary vote over two alternatives.
Our framework generalizes standard epistemic voting by introducing a \textit{calibration mechanism}. There are $N$ agents who act over $T$ rounds, facing a sequence of independent tasks. Each agent $a_i$ possesses a fixed but unknown \textit{distribution-level} reliability $p_i \in [0,1]$ (their inherent probability of solving a random task correctly).

Crucially, agents do not ``learn'' to perform the task better (reinforcement learning); rather, they \textit{calibrate their confidence} to estimate their static competence $p_i$. After each round $t < T$, agents receive private feedback, update their belief about $p_i$, and compute a confidence score. They follow an explicit target: publish a vote only if confidence exceeds a threshold $\tau_{\mathrm{abstain}}$. This creates a \textit{filtering effect}: the learning phase removes low-competence agents from the final electorate. In the terminal round $T$, we aggregate only the published votes of this calibrated sub-group.

\begin{example}
Figure \ref{LearningExample} illustrates the calibration process. Four agents begin with a prior belief about their reliability. Over $T=9$ calibration rounds, they update their internal confidence $\mathcal{C}_{i,t}$ based on feedback. At the final decision round ($t=10$), only agents whose confidence exceeds $\tau_{\mathrm{abstain}}=0.5$ (Agents 2 and 3) enter the electorate. The low-performing Agent 1 (blue) correctly identifies its own unreliability and abstains, preventing a potential error.

\begin{figure}[ht]
    \centering
\begin{tikzpicture}[scale=1]
\begin{axis}[
    width=13cm,
    height=7cm,
    xlabel={Time step $t$},
    ylabel={Confidence $\mathcal{C}_{i,t}$},
    ymin=0, ymax=1,
    xmin=1, xmax=11,
    xtick={1,...,10},
    ytick={0,0.25,0.5,0.75,1},
    grid=both,
    grid style={gray!30},
    title={Calibration and Selective Voting},
    legend style={
        at={(1.01,0.5)},
        anchor=west,
        font=\small,
        cells={anchor=west},
        draw=black,
        fill=white,
        rounded corners
    },
    clip=false,
    legend image code/.code={},
]

\addplot [
    draw=none,
    fill=gray!10
] coordinates {(1,0) (1,1) (9,1) (9,0) (1,0)};
\node at (axis cs:5,1) [anchor=north, font=\small] {Calibration Phase};

\addplot[black, thick, dashed] coordinates {(1,0.5) (10,0.5)};
\addlegendentry{$\tau_{\mathrm{abstain}}=0.5$}

\addplot[blue, thick] coordinates {
    (1,0.45) (2,0.52) (3,0.58) (4,0.63) (5,0.68)
    (6,0.55) (7,0.40) (8,0.30) (9,0.25) (10,0.20)
};
\addlegendentry{\textcolor{blue}{Agent 1}}
\addplot[only marks, mark=o, mark size=3, blue] coordinates {
    (1,0.45) (7,0.40) (8,0.30) (9,0.25) (10,0.20)
};
\addplot[only marks, mark=*, mark size=3, blue] coordinates {
    (2,0.52) (3,0.58) (4,0.63) (5,0.68) (6,0.55)
};

\addplot[red, thick] coordinates {
    (1,0.35) (2,0.40) (3,0.48) (4,0.52) (5,0.56)
    (6,0.62) (7,0.70) (8,0.75) (9,0.78) (10,0.82)
};
\addlegendentry{\textcolor{red}{Agent 2}}
\addplot[only marks, mark=o, mark size=3, red] coordinates {
    (1,0.35) (2,0.40) (3,0.48)
};
\addplot[only marks, mark=*, mark size=3, red] coordinates {
    (4,0.52) (5,0.56) (6,0.62) (7,0.70) (8,0.75) (9,0.78) (10,0.82)
};

\addplot[green!70!black, thick] coordinates {
    (1,0.60) (2,0.65) (3,0.66) (4,0.67) (5,0.68)
    (6,0.70) (7,0.72) (8,0.74) (9,0.77) (10,0.79)
};
\addlegendentry{\textcolor{green!50!black}{Agent 3}}
\addplot[only marks, mark=*, mark size=3, green!70!black] coordinates {
    (1,0.60) (2,0.65) (3,0.66) (4,0.67) (5,0.68)
    (6,0.70) (7,0.72) (8,0.74) (9,0.77) (10,0.79)
};

\addplot[purple, thick] coordinates {
    (1,0.30) (2,0.35) (3,0.38) (4,0.40) (5,0.42)
    (6,0.44) (7,0.46) (8,0.47) (9,0.48) (10,0.49)
};
\addlegendentry{\textcolor{purple}{Agent 4}}
\addplot[only marks, mark=o, mark size=3, purple] coordinates {
    (1,0.30) (2,0.35) (3,0.38) (4,0.40) (5,0.42)
    (6,0.44) (7,0.46) (8,0.47) (9,0.48) (10,0.49)
};

\addlegendimage{} 
\addlegendentry{Abstain ($\circ$)}
\addlegendimage{} 
\addlegendentry{Vote ($\bullet$)}

\draw[dashed, thick, black] (axis cs:10.1,0.8) ellipse [x radius=0.6, y radius=0.15];

\end{axis}
\end{tikzpicture}
    \caption{Learning and Final Vote Visualization.}
\label{LearningExample}
\end{figure}
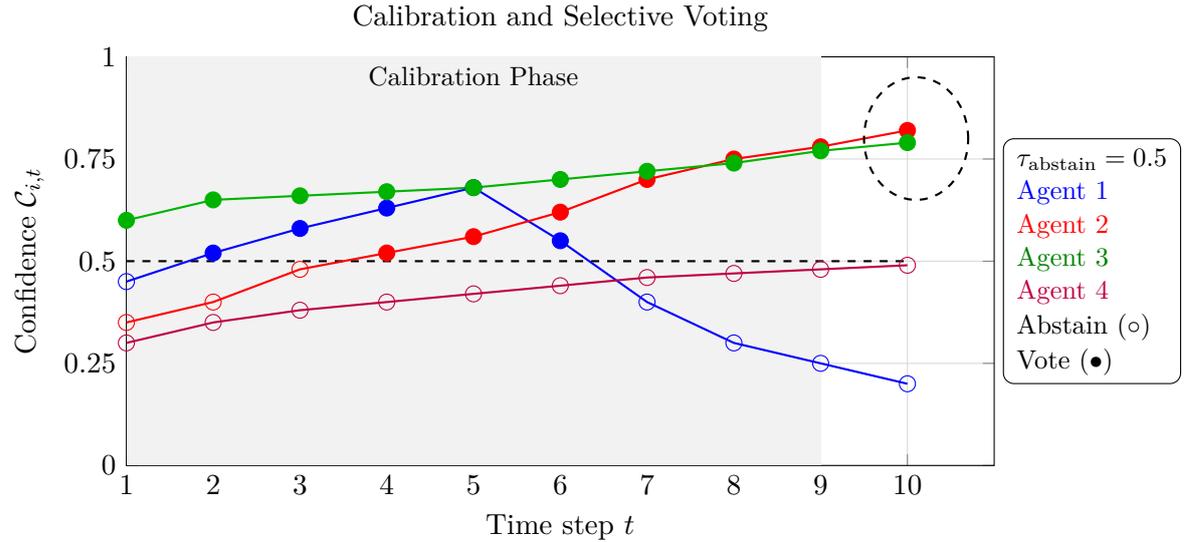
\end{example}

\subsection{Related Work}

\paragraph{Generalizations of the Condorcet Jury Theorem.}

Extensions of the CJT typically either allow for more general voting rules or weaken the \emph{homogeneity} or \emph{independence} assumption, thereby extending the \emph{asymptotic guarantee} of the CJT.
\citeauthor{List} (\citeyear{List}) generalized the result to \textit{plurality voting} over multiple alternatives, while \citeauthor{Everaere} (\citeyear{Everaere}) extended it to \textit{approval voting}, assuming that the probability of approving the correct alternative exceeds that of any incorrect one.
Crucially for our work, \citeauthor{Owen} (\citeyear{Owen}) relaxed the homogeneity assumption, proving that convergence requires only that the \textit{average} reliability of the group exceeds 0.5.
Other works have addressed independence by introducing an \textit{opinion leader}, swaying the electorate towards one of the alternatives \cite{Boland,Goodin}, or limiting pairwise correlations \cite{Ladha,Pivato}.


In this work, we likewise generalize the CJT by (1) extending the one-shot decision model to a \emph{sequential setting}, (2) permitting \emph{heterogeneous agents}, (3) whose decisions can \emph{depend on their previous choices}, and (4) allowing agents to \emph{abstain} from voting.

\vspace{-0.25cm}

\paragraph{Delegation and Liquid Democracy.}

Permitting agents to abstain can be viewed as delegating their vote to the remaining group. This connects to \textit{liquid democracy}, where agents transitively delegate votes to those deemed more capable.

However, there is a critical distinction to the framework proposed here. Standard liquid democracy models constrain delegation to \textit{local social ties} \cite{Kahng,Alouf}. \citeauthor{Kahng} (\citeyear{Kahng}) prove that such local mechanisms cannot guarantee to outperform direct voting because they risk \textit{weight concentration}, where a few proxies accumulate massive influence, destroying the independence required for the ``wisdom of the crowd.''

Our framework avoids this failure mode by using \textit{calibration-based abstention} rather than social delegation. Agents do not delegate to a specific neighbor; they effectively ``delegate'' to the collective mean by withdrawing their noise, keeping influence distributed among all confident agents.

\vspace{-0.25cm}
\paragraph{Contributions.}

We develop a probabilistic framework for collective decision-making with calibrated abstention. Our main contributions are:
(1) A sequential decision model in which agents undergo a \textit{confidence calibration} phase to estimate their own competence and \textit{selectively participate} via a confidence gate;
(2) A generalization of the CJT to this sequential setting, proving asymptotic convergence for a heterogeneous, confidence-gated electorate; and
(3) A non-asymptotic lower bound on the probability of a correct majority.
(4) Finally, we demonstrate the utility of these bounds via Monte Carlo simulations.

\subsection{Motivational Scenario}

To further motivate the framework proposed in this work, we briefly discuss its ties to the \emph{hallucination problem} for Large Language Models (LLMs), a phenomenon in which they produce confident yet factually incorrect outputs. A recent OpenAI study by \citeauthor{kalai2025} (\citeyear{kalai2025}) argues that many LLM \textit{hallucinations} are ordinary \textit{binary classification} errors: under binary grading (responses scored as simply correct vs.\ incorrect), systems are implicitly rewarded for guessing and penalized for abstaining, which can encourage confident yet wrong answers. They urge a shift in how we \textit{evaluate and deploy} LLMs by introducing explicit confidence targets and granting credit for admitting not to know the answer (``IDK'') when uncertainty is high, so models answer only when sufficiently confident.

This presents a theoretical challenge: How can we aggregate agents who are incentivized to selectively abstain? We answer this with a probabilistic voting framework: agents undergo a calibration phase to learn their own competence, \textit{abstain unless confident}, and only the resulting ``filtered'' electorate votes at the final decision point. This operationalizes confidence-targeted evaluation and, via concentration bounds, yields probabilistic guarantees on the correctness of the group decision in \Cref{maintheorem} and a bound on the probability for a group of agents hallucinating in \Cref{cor:hallucination-bound}.

In this context, our approach synthesizes three active lines of LLM research. First is \textit{ensemble-style decoding}, most notably Self-Consistency \cite{wang2022selfconsistency}, which samples multiple reasoning paths (mimicking diverse agents) and aggregates answers to improve accuracy. Second is the tradition of \textit{classification with a reject option}, ranging from Chow’s optimal abstention rule \cite{chow1957optimum} to modern deep selective prediction \cite{geifman2019selectivenet}; our framework operationalizes this by treating agents as selective classifiers with calibrated refusal. Third, recent studies on \textit{LLM uncertainty} \cite{kadavath2022know} suggest models’ self-evaluated competence correlates with correctness. In our terms, an agent's self-evaluated competence can serve as a prior in the calibration phase.

 \section{Preliminaries}

\paragraph{The Formal Voting Framework.}
As the basic building blocks of our binary voting framework, let there be two alternatives, 
$\mathcal W=\{\ost,\odg\}$, and encode $\ost$ as $+1$ and $\odg$ as $-1$.
 For a single decision instance, we denote the (a priori unknown) true world state by a random variable $\Omega \in \mathcal{W}$. However, our setting involves a sequence of $T$ tasks. Therefore, for each time step $t \in \{1, \dots, T\}$, we let $\Omega_t \in \mathcal{W}$ denote the ground truth of the task at $t$, and we write $\boldsymbol{\Omega} = (\Omega_1, \dots, \Omega_T) \in \mathcal{W}^T$ for the complete sequence of ground truths. Probabilities of internal private choices and public votes at time $t$ are taken \textit{conditional on} $\Omega_t$. When we say “under the true state,” we generally refer to the event $\{\Omega_t=\ost\}$ (or $\{\Omega_t=\odg\}$) for the current task.

Let $\mathcal{A}=\{a_1,\dots,a_N\}$ be the set of agents with $N := \vert\mathcal{A}\vert$. A single majority-voting instance (e.g., the collective decision at time $T$) is modeled as a partial function $v:\mathcal A\to\mathcal W$ defined on the set of non-abstainers. The score of $\omega\in\mathcal W$ is $\#_v\,\omega:=|\{i\in\mathcal A: v(i)=\omega\}|$, and the winner is any $\omega$ with strictly higher score than its competitor, thereby treating ties as non-wins \citep{karge2024lead}.

\paragraph{Confidence Measure and Updating.}

In our model, every agent $a_i$ faces a sequence of i.i.d.\ tasks from $\mathcal D$ over $T$ time steps. Rounds $t\in\{1,\dots,T-1\}$ form a \textit{learning phase} (agents receive feedback and update beliefs); round $t=T$ is the \textit{decision phase} (only this round’s public votes are aggregated). We posit a fixed, unknown parameter $p_i\in[0,1]$ capturing \emph{distribution-level reliability}:
\[
p_i\ :=\ \mathbb P\big(\text{$a_i$ is correct on a random task from }\mathcal D\big).
\]
Intuitively, $p_i$ represents the agent's inherent \textit{epistemic type} or domain-specific competence (e.g., a juror's aptitude for interpreting evidence in a specific class of legal cases). Under the assumption that decision tasks are drawn i.i.d., $p_i$ acts as a structural invariant; this implies that an agent's past frequency of success is a valid signal for learning their own type, rather than a reflection of changing ability.
Equivalently, for each round $t$, the correctness indicator $\mathbf 1\{X_{i,t}=\Omega_t\}$ is Bernoulli($p_i$) and independent across $t$ (and across agents), conditional on $\{p_i\}$ and the sequence of ground truths $\{\Omega_t\}$. We denote the \textit{average probability} of making the correct choice by $\bar{p}$, where $\bar{p} = \frac{1}{N}\sum_{i=1}^N p_i$.

The agent's \textit{probabilistic belief} about its true reliability $p_i$ at time $t$ is represented by a random variable $\Psi_{i,t}$, which follows a Beta distribution:
$$
\Psi_{i,t} \sim \text{Beta}(\alpha_{i,t}, \beta_{i,t})
$$
where $\alpha_{i,t} > 0$ and $\beta_{i,t} > 0$ are the current shape parameters (often interpreted as pseudo-counts of correct and incorrect outcomes, respectively). These parameters are updated dynamically as the agent receives feedback on their choices in each binary decision problem.
The agent's \textit{confidence} in its reliability, denoted by $\mathcal{C}_{i,t}$, is derived from this belief, represented by the distribution of $\Psi_{i,t}$.

Specifically, $\mathcal{C}_{i,t}$ is defined as the posterior probability that $p_i$ is greater than a critical probability threshold, ${p_{\text{critical},i}}$. That is:
\[
\mathcal{C}_{i,t} := \mathbb{P}(\Psi_{i,t} > {p_{\text{critical},i}})
\]
For example, for binary decisions, a natural choice for the critical probability threshold is ${p_{\text{critical},i}} = 0.5$. In this context, $\mathcal{C}_{i,t}$ represents the agent's belief that its true reliability $p_i$ is greater than $0.5$. If, for instance, $\mathcal{C}_{i,t} = 0.65$, it means the agent believes there is a $65\%$ probability that its true reliability $p_i$ is greater than $0.5$.

\paragraph{Fundamentals of the Beta Distribution.}

The Beta distribution is a continuous probability distribution defined on the interval $[0, 1]$, parameterized by two positive shape parameters $\alpha$ and $\beta$.

The probability density function (PDF) of the Beta distribution is given by:
\[
f(x; \alpha, \beta) = \frac{x^{\alpha-1} (1-x)^{\beta-1}}{B(\alpha, \beta)}, \quad x \in [0, 1]
\]
where $B(\alpha, \beta)$ is the \textit{Beta function}, a normalization constant defined as $B(\alpha, \beta) = \int_0^1 x^{\alpha-1} (1-x)^{\beta-1} \, dx$. The parameters $\alpha$ and $\beta$ can be interpreted as "pseudo-counts" of successes and failures, respectively, pulling the distribution toward 1 or 0 accordingly. When $\alpha = \beta = 1$, the Beta distribution reduces to the uniform distribution.

At each decision instance at time $t$, each agent $a_i$ receives feedback on whether its private choice was correct. This feedback is used to update the parameters of its Beta belief distribution. If agent $a_i$'s current belief state is $\text{Beta}(\alpha_{i,t}, \beta_{i,t})$:
\begin{itemize}
\item If agent $a_i$'s private decision at time $t$ is \textit{correct}:
\[
(\alpha_{i,t+1}, \beta_{i,t+1}) = (\alpha_{i,t} + 1, \beta_{i,t})
\]
\item If agent $a_i$'s private decision at time $t$ is \textit{incorrect}:
\[
(\alpha_{i,t+1}, \beta_{i,t+1}) = (\alpha_{i,t}, \beta_{i,t} + 1)
\]
\end{itemize}
These updates reflect the incremental learning of agent $a_i$'s true reliability $p_i$ across different time steps.


\begin{example}
Consider an agent $a_i$ whose belief about its reliability is modeled by $\Psi_{i,t} \sim \text{Beta}(2, 3)$ and later, after more decisions, by $\Psi_{i,t} \sim \text{Beta}(20, 30)$. The PDFs of these distributions are shown below.

\begin{figure}[htbp]
    \centering
    
\includegraphics[width=0.8\textwidth]{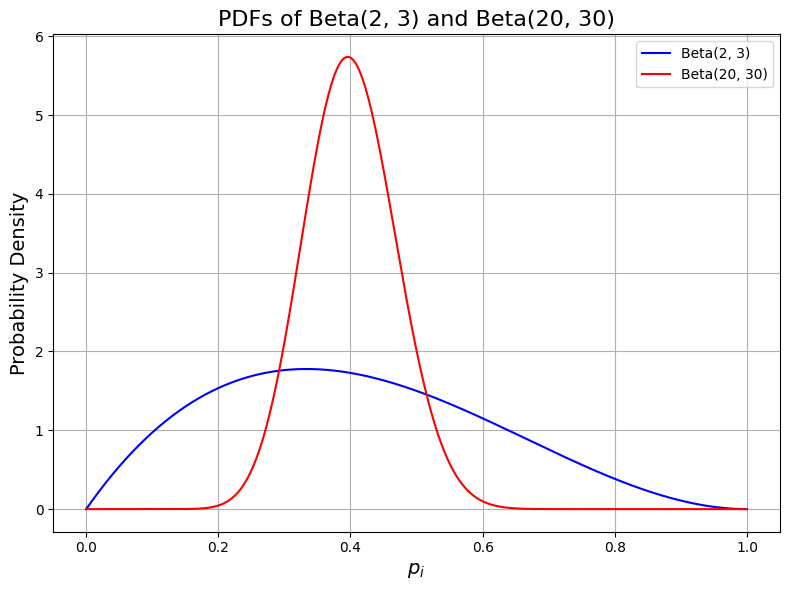}
    \caption{Beta Belief Visualization.}
\end{figure}

This illustrates how the agent's internal model assigns probability densities to different possible values of its true reliability $p_i$.
In this example, both pairs of $\alpha$ and $\beta$ parameters maintain the same proportion, but an increase in the total counts significantly reduces the agent's uncertainty. 
\end{example}

\paragraph{An Abstention Threshold.}

Building on the previous section, agent $a_i$'s confidence $\mathcal{C}_{i,t}$ at time $t$ is defined as $\mathcal{C}_{i,t} = \mathbb{P}(\Psi_{i,t}> {p_{\text{critical},i}})$, where $\Psi_{i,t} \sim \text{Beta}(\alpha_{i,t}, \beta_{i,t})$ represents the agent's current belief about its true reliability $p_i$. Agent $a_i$ makes a public choice only if its confidence exceeds a specific threshold, denoted as $\tau_{\text{abstain},i}$.
 
Recall that in the voting process, we distinguish \textit{private choices} and \textit{public votes}, and that $T$ denotes the total number of decision instances.
At each time $t$, the agent \textit{privately} decides and learns from feedback; if sufficiently confident, it also \textit{publishes} a vote.
Let $X_{i,t}\in\{-1,1\}$ be the private decision ($+1\equiv\ost$, $-1\equiv\odg$). The probability of a decision $x$ given a true state $\omega$ is defined as:

\begin{align*}
\mathbb P(X_{i,t}=x \mid \Omega_t=\omega) 
= \begin{cases} 
p_i & \text{if } x \text{ encodes } \omega, \\
1-p_i & \text{otherwise}.
\end{cases}
\end{align*}






This means that for agent $a_i$, the probability of making the correct private choice is always $p_i$. These private choices are used internally by agent $a_i$ to update its belief $\Psi_{i,t}$ (and thus $\alpha_{i,t}, \beta_{i,t}$) after each decision, regardless of whether it makes a public choice.

We assume that the private decisions of agents are made independently across trials and across agents, defined analogously to the case where we consider one-shot voting instances \citep{karge2024lead}:


\begin{definition}[Private Signal Independence]\label{independence}
A collection $(X_{i,t})_{i\in\{1,\dots,N\},\,t\in\{1,\dots,T\}}$ satisfies private signal independence if, for any sequence of ground truths $\boldsymbol{\Omega} \in \mathcal{W}^T$ and any $(v_{i,t})\in\{-1,1\}^{N\times T}$,
\begin{align*}
\mathbb P\left(\bigwedge_{i=1}^N\bigwedge_{t=1}^T (X_{i,t}=v_{i,t}) \mathrel{\Big|} \boldsymbol{\Omega}=\boldsymbol{\omega}\right) 
= \prod_{i=1}^N\prod_{t=1}^T \mathbb P\left(X_{i,t}=v_{i,t} \mid \Omega_t=\omega_t\right).
\end{align*}
\end{definition}



Agent $a_i$'s decision to make a public vote at time $t$ is captured by a \textit{binary indicator variable} $D_{\text{vote},i}(t) \in \{0, 1\}$, defined as:
\[
D_{\text{vote},i}(t) = \begin{cases}
1 & \text{if } \mathcal{C}_{i,t} > {\tau_{\text{abstain},i}} \\
0 & \text{if } \mathcal{C}_{i,t} \leq {\tau_{\text{abstain},i}}
\end{cases}
\]
Thus, $D_{\text{vote},i}(t)$ acts as a deterministic switch at each time step $t$: if agent $a_i$'s confidence $\mathcal{C}_{i,t}$ exceeds the $\tau_{\text{abstain},i}$ threshold, $D_{\text{vote},i}(t)=1$ (indicating a public vote); otherwise, $D_{\text{vote},i}(t)=0$ (indicating abstention). Note that $D_{\text{vote},i}(t)$ is determined by agent $a_i$'s internal belief state at time $t$.

Agent $a_i$'s public vote at time step $t$, denoted $V_{i,t} \in \{-1,0, 1\}$, is determined by its private choice $X_{i,t}$ only if the agent decides to vote publicly. Specifically, $V_{i,t} = X_{i,t}$ if $D_{\text{vote},i}(t)=1$, and $V_{i,t} = 0$ if $D_{\text{vote},i}(t)=0$. That is:
$$V_{i,t} = D_{\text{vote},i}(t) \cdot X_{i,t}$$
Given this definition, the overall probability for agent $a_i$'s public vote to align with the true state is:

\begin{align*}
&\mathbb P(V_{i,t}=+1 \mid \Omega_t=\ost)=\mathbb P(D_{\text{vote},i}(t)=1 \mid \Omega_t=\ost)\,p_i,
\end{align*}

and analogously under $\Omega_t=\odg$. 
Here, $\mathbb{P}(D_{\text{vote},i}(t)=1)$ represents the probability that agent $a_i$'s confidence $\mathcal{C}_{i,t}$, based on its accumulated experience up to time $t$, is sufficiently high for it to make a public choice at this time step. Note that in our analysis later, only $V_{i,T}$ enters the final tally; earlier $V_{i,t}$ ($t<T$) are specific to the learning period and not aggregated.

The abstention condition, $\mathcal{C}_{i,t} \leq \tau_{\text{abstain},i}$, can be expressed explicitly using the properties of the Beta distribution and the regularized incomplete Beta function such that $\mathbb{P}(\Psi_{i,t} > {p_{\text{critical},i}}) \leq {\tau_{\text{abstain},i}}$ is equivalent to:
$$1 - I_{{p_{\text{critical},i}}}(\alpha_{i,t}, \beta_{i,t}) \leq {\tau_{\text{abstain},i}}$$
where $I_x(\alpha, \beta)$ is the regularized incomplete Beta function. 

\paragraph{The Regularized Incomplete Beta Function.}

The regularized incomplete Beta function \( I_x(\alpha, \beta) \) is defined as:
\[
I_x(\alpha, \beta) = \frac{1}{B(\alpha, \beta)} \int_0^x t^{\alpha - 1}(1 - t)^{\beta - 1} dt
\]
where \( B(\alpha, \beta) \) is the complete Beta function:
\[
B(\alpha, \beta) = \int_0^1 t^{\alpha - 1}(1 - t)^{\beta - 1} dt
\]
The regularized function \( I_x(\alpha, \beta) \) represents the cumulative distribution function (CDF) of the Beta distribution, such that:
\[
\mathbb{P}(\Psi \leq x) = I_x(\alpha, \beta), \quad \text{and} \quad \mathbb{P}(\Psi > x) = 1 - I_x(\alpha, \beta).
\]

\begin{example}
Consider an agent with $\Psi_{i,t}\sim\mathrm{Beta}(4,3)$ and $p_{\mathrm{critical},i}=0.5$. Its confidence is
\[
\mathcal{C}_{i,t}=\mathbb{P}(\Psi_{i,t} >0.5)=1-I_{0.5}(4,3)\approx 0.65.
\]
Thus the agent assigns a $65\%$ probability that its true reliability exceeds $0.5$. If the abstention target is $\tau_{\mathrm{abstain},i}=0.5$ it would publish; if $\tau_{\mathrm{abstain},i}=0.75$ it would abstain.
\end{example}

\paragraph{Framework Summary.} 
The process unfolds over $T$ discrete steps involving a sequence of independent tasks. Rounds $t\in\{1,\dots,T-1\}$ constitute the \textit{calibration phase}, where agents learn about their own capabilities; round $T$ is the \textit{decision phase}, where public votes on the final task $\Omega_T$ are aggregated. 

\textit{Model Assumption (Static Competence vs.\ Dynamic Belief):} We assume each agent's true reliability $p_i$ is fixed (representing inherent capability). The ``learning'' process acts as a \textit{confidence calibration phase}: agents do not improve their task performance $p_i$ over time, but rather refine their \textit{estimate} of $p_i$ to distinguish whether they are sufficiently reliable to vote.

Within each round $t\le T$, the protocol proceeds as follows:
(i) \textit{Belief State:} The pre-decision belief $(\alpha_{i,t},\beta_{i,t})$ summarizes feedback from tasks $1$ to $t-1$, inducing a confidence score $\mathcal C_{i,t}$;
(ii) \textit{Gating:} The agent chooses to \textit{publish or abstain} based on its internal estimate: $D_{\mathrm{vote},i}(t)=\mathbf{1}\{\mathcal C_{i,t}>\tau_{\mathrm{abstain},i}\}$;
(iii) \textit{Task Execution:} A new task $\Omega_t$ is drawn. The private signal $X_{i,t}\in\{-1,1\}$ is realized according to the agent's fixed competence $p_i$ (Bernoulli trial), independent of the current belief state;
(iv) \textit{Feedback:} The public vote is $V_{i,t}=D_{\mathrm{vote},i}(t)\cdot X_{i,t}$. Correctness feedback on $X_{i,t}$ is observed (regardless of abstention), updating the belief to $(\alpha_{i,t+1},\beta_{i,t+1})$. 
Only the published votes $V_{i,T}$ at the final step are aggregated for the collective decision.


\section{Probabilistic Modeling of Dynamic Beliefs and Collective Voting}

This section lays the theoretical groundwork for an analysis of the final majority vote outcome, specifically bounding the probability of the correct alternative winning. This analysis integrates the dynamic evolution of agent beliefs and their abstention probabilities. Our approach proceeds in three key steps: We first introduce a probabilistic model that captures the cumulative information growth of agents across trials, formalizing this with \textit{martingales} and \textit{filtrations}. Subsequently, we derive the expected value of the aggregated votes favoring the correct alternative at the final time step. Finally, to facilitate the application of the \textit{Azuma-Hoeffding inequality}, a concentration inequality designed to provide tight bounds on the probability that a sum of dependent random variables deviates significantly from its mean, we construct a martingale difference sequence based on \textit{Doob martingales}. This sequence effectively isolates and quantifies the total deviation of the final vote, which is precisely what the Azuma-Hoeffding inequality operates on.

\paragraph{Martingales and Filtration.} To rigorously analyze the behavior of our agents and derive results akin to a jury theorem for their collective decisions, we establish a robust probabilistic framework. This framework, based on the concepts of \textit{martingales} and \textit{filtrations}, allows us to model the dynamic evolution of agents' beliefs and the accumulation of information within the system. 
We mostly follow the definitions given in \citep{durrett2019probability}, adapted to our framework.
\textit{Martingales} and \textit{filtrations} have clear intuitive meanings in thinking about betting games. For example, \textit{martingales} can be thought of as the fortune of a player betting on a fair game \citep{durrett2019probability}, formally described as a special kind of sequence of random variables whose values evolve over time. A \textit{filtration} can be thought of as sets of information increasing as the martingales evolve, capturing the game's history. Roughly speaking, in our setting, we can see the sequence of decisions made by the agents, as well as the feedback they receive in order to update their beliefs, as such a game.

In order to formally introduce both concepts we need to define \textit{probability space} and \textit{$\sigma$-algebra} first:

\begin{definition}
A \textit{probability space} is a triple $(\mathsf S, \mathcal{F}, \mathbb{P})$ where $\mathsf S$ is a set of outcomes, $\mathcal{F}$ is a collection of events (subsets of $\mathsf S$), and $\mathbb{P} : \mathcal{F} \rightarrow [0,1]$ is a probability measure. We assume that $\mathcal{F}$ is a $\sigma$-algebra, meaning it is a nonempty collection of subsets of $\mathsf S$ that satisfy \citep{durrett2019probability}:
\begin{itemize}
\item[(i)] If $A \in \mathcal{F}$, then its complement $A^c \in \mathcal{F}$.
\item[(ii)] If $A_j \in \mathcal{F}$ is a countable sequence of sets, then $\bigcup_{j=1}^\infty A_j \in \mathcal{F}$.
\end{itemize}
\end{definition}
These properties ensure that $\mathcal{F}$ comprises all subsets of $\mathsf S$ for which we can meaningfully assign probabilities. We first introduce \textit{filtration} and \textit{martingales} with respect to a single agent, extending these concepts to the multi-agent case shortly after.

\begin{definition}[Filtration]
A \textit{filtration} is a sequence of increasing $\sigma$-algebras $(\mathcal{F}_t)_{t=0, 1, \dots, T}$ on a probability space $(\mathsf S, \mathcal{F}, \mathbb{P})$.
\end{definition}

In particular, a filtration has the property $\mathcal{F}_s \subseteq \mathcal{F}_t$ for $s < t$ which signifies that information (or events that are observable) only grows over time. In our context, considering a single agent $a_i$, $\mathcal{F}_{i,0}$ corresponds to agent $a_i$'s prior belief about its reliability derived from $(\alpha_{i,0}, \beta_{i,0})$, whereas $\mathcal{F}_{i,t}$ captures all private outcomes observed by agent $a_i$ up to trial $t$, which are used to update agent $a_i$'s belief $\Psi_{i,t}$. From this, we can define an \textit{adapted stochastic process}:

\begin{definition}[Adapted Stochastic Process]
A stochastic process $(M_t)_{t=0, 1, \dots, T}$, i.e. a sequence of random variables, is said to be \textit{adapted} to a filtration $(\mathcal{F}_t)_{t=0, 1, \dots, T}$ if, for each $t$, $M_t$ is measurable with respect to $\mathcal{F}_t$.
\end{definition}

This means that the value of $M_t$ is known given the information available at time $t$ \citep{durrett2019probability}. Having defined filtrations as well as stochastic processes being adapted to filtrations, we can introduce \textit{martingales}:

\begin{definition}[Martingale]
A stochastic process, denoted 

$(M_t)_{t=0, 1, \dots, T}$, is called a \textit{martingale} with respect to a filtration $(\mathcal{F}_t)$ if it satisfies the following three conditions:
\begin{enumerate}
\item $\mathbb{E}[|M_t|] < \infty$ for all $t$. (Integrability: the expected absolute value of the process at any time is finite.)
\item $M_t$ is adapted to $(\mathcal{F}_t)$ for all $t$. (Adaptedness: the value of $M_t$ is known given the information up to time $t$.)
\item $\mathbb{E}[M_{t+1} \mid \mathcal{F}_t] = M_t$. (Fair Game Property: the expected future value of the process, given all past and present information, is equal to its current value.)
\end{enumerate}
\end{definition}

Intuitively, a martingale is your running best estimate of a \textit{target random variable} as information accumulates: conditioned on what you currently know, your expected next estimate equals the estimate you hold now. This is exactly the idea we use when we form the Doob martingale to apply the Azuma--Hoeffding inequality.

\paragraph{Multiple Agents.}

We extend our model to $N$ agents, each facing the binary decision problem $T$ times. Each agent $a_i$ (for $i=1, \dots, N$) has its own true reliability $p_i$, initial Beta parameters $(\alpha_{i,0}, \beta_{i,0})$, critical probability ${p_{\text{critical},i}}$, and abstention threshold ${\tau_{\text{abstain},i}}$.
When dealing with multiple agents, the definition of the filtration needs to encompass the information available to all agents. Assuming agents do not observe each other's private outcomes or confidence states, the natural filtration for the collective system would be the \textit{joint filtration} generated by the private outcomes of all agents representing the combined history of all private outcomes observed by all agents.






This can be captured by an event-based filtration, i.e. the sequential revelation of each individual private signal across all agents and time steps.
We define $K = NT$ as the total number of private outcomes revealed over all agents and all time steps. That is, we order the $N \times T$ individual agent-time-step private outcomes sequentially using a combined index $k = (t-1)N + i$, running from $1$ to $NT$. Let these outcomes be ordered $X_{(1)}, X_{(2)}, \dots, X_{(K)}$, where the ordering progresses through time steps first, then through agents (e.g., $X_{(1)}=X_{1,1}, \dots, X_{(N)}=X_{N,1}, X_{(N+1)}=X_{1,2}, \dots, X_{(NT)}=X_{N,T}$).

\begin{definition}
The \textit{event-based filtration} $(\mathcal{H}_k)_{k=0, 1, \dots, NT}$ is a sequence of increasing $\sigma$-algebras, where $\mathcal{H}_k$ represents the collective information generated by the first $k$ individual private outcomes revealed across the system.
Formally, for each $k \in \{1, \dots, NT\}$, the $\sigma$-algebra $\mathcal{H}_k$ is defined as:
\[
\mathcal{H}_k = \sigma(\{X_{(j)} : j \in \{1, \dots, k\}\})
\]
where $\mathcal{H}_0$ is the $\sigma$-algebra representing initial knowledge.
\end{definition}
It is this event-based filtration $(\mathcal{H}_k)$ that serves as the basis for constructing the Doob martingale in the context of concentration inequalities on the aggregate vote, as it accounts for the impact of each single new piece of information. Observe that the filtration $(\mathcal H_k)$ is an analyst filtration; individual agents observe only their own histories ($
\mathcal F_{i,t}\ :=\ \sigma\!\big(\{X_{i,s}:1\le s\le t\}\big),
\textit{for \:} t=0,1,\dots,T
$), but not those of other agents.

Now that we have a probabilistic model for the internal belief updating process of multiple agents based on their private decisions, we proceed to model the public votes in a next step.
Recall that we defined the random variable representing agent $i$'s net public vote at time step $t$ as $V_{i,t}$ as follows: $1$, if agent $i$ publicly votes for $\ost$; $-1$, if agent $i$ publicly votes for $\odg$; $0$, if agent $i$ abstains.


Following the notation of \citeauthor{karge2024lead} (\citeyear{karge2024lead}), the collective net public vote at the final time step $T$ is denoted by ${V}^{\ost-\odg}_{T}$ s.t. $\ost$ wins the collective vote iff ${V}^{\ost-\odg}_{T}>0$:
\begin{align*}
{V}^{\ost-\odg}_{T} & = \sum_{i=1}^N V_{i,T}.
\end{align*}

\paragraph{Expected Collective Net Public Vote at Final Time $T$.}

Throughout this subsection, we analyze the vote on the final task $\Omega_T$. We condition on the target true state being $\ost$: $\mathbb P_\ost(\cdot):=\mathbb P(\cdot\mid \Omega_T=\ost)$ and $\mathbb E_\ost[\cdot]$ accordingly. 

To characterize the collective vote at time $T$, we first determine its expectation. By linearity of expectation, the \textit{expected collective net public vote at time $T$} is the sum of the expected individual net public votes at time $T$:

$$
\mathbb{E}_\ost[V^{\ost-\odg}_{T}] = \mathbb{E}_\ost\left[\sum_{i=1}^{N} V_{i,T} \right]
= \sum_{i=1}^{N} \mathbb{E}_\ost[V_{i,T}] .
$$

For each agent $a_i$, the expected individual net public vote at time $T$, $\mathbb{E}[V_{i,T}]$, is derived as:
\[
\mathbb{E}_\ost[V_{i,T}] = \mathbb{E}_\ost[D_{\text{vote},i}(T) \cdot X_{i,T}].
\]
Since $D_{\text{vote},i}(T)$ depends on agent $a_i$'s updated belief state at time $T$ (determined by their private history up to $T-1$, thus $D_{\text{vote},i}(T)$ is $\mathcal{F}_{i,T-1}$-measurable), and by conditional independence given that $X_{i,T}$ is agent $a_i$'s private choice at time $T$, we can apply the law of total expectation:
\begin{align*}
\mathbb{E}_\ost[V_{i,T}] 
&= \mathbb{E}_\ost\!\big[\,\mathbb{E}_\ost[D_{\text{vote},i}(T) X_{i,T}\mid \mathcal{F}_{i,T-1}]\,\big]
\\&= \mathbb{E}_\ost\!\big[D_{\text{vote},i}(T)\,\mathbb{E}_\ost[X_{i,T}\mid \mathcal{F}_{i,T-1}] \big]
\\& = \mathbb{E}_\ost[D_{\text{vote},i}(T) \cdot (2p_i-1)].
\end{align*}


As $p_i$ is a constant (true reliability), the term $(2p_i-1)$ is also a constant. Therefore, we can factor it out of the expectation:
$$
\mathbb{E}_\ost[V_{i,T}] = (2p_i-1) \mathbb{E}_\ost[D_{\text{vote},i}(T)].
$$

The expected value of $D_{\text{vote},i}(T)$ is the probability that agent $a_i$ decides to vote publicly at time $T$:
\begin{align*}
\mathbb{E}_\ost[D_{\text{vote},i}(T)] = \mathbb{P}_\ost(\mathcal{C}_{i,T} > {\tau_{\text{abstain},i}})
\end{align*}
Substituting this back into the sum for $\mathbb{E}_\ost[{V}^{\ost-\odg}_{T}]$:
\begin{align*}
\mathbb{E}_\ost[{V}^{\ost-\odg}_{T}] &= \sum_{i=1}^{N} (2p_i-1) \mathbb{P}_\ost(\mathcal{C}_{i,T} > {\tau_{\text{abstain},i}}).
\end{align*}
To compute $\mathbb{P}_\ost(\mathcal{C}_{i,T} > {\tau_{\text{abstain},i}})$ for each agent $a_i$, we must sum over all possible numbers of correct outcomes ($k$) that could have occurred in the $T-1$ preceding trials \textit{for that specific agent $a_i$}. If $k$ outcomes were correct and $(T-1-k)$ were incorrect, then agent $a_i$'s Beta parameters at time $T$ would be $\alpha_{i,T} = \alpha_{i,0} + k$ and $\beta_{i,T} = \beta_{i,0} + (T-1-k)$, where $\alpha_{i,0}$ and $\beta_{i,0}$ are agent $a_i$'s initial prior parameters for its own competence $p_i$. The probability of observing $k$ correct outcomes in $T-1$ trials, given agent $a_i$'s true reliability $p_i$, follows a binomial distribution:


\begingroup
\small
\begin{align}\label{expectation}
 \mathbb{P}_\ost(\mathcal{C}_{i,T} > {\tau_{\text{abstain},i}}) = \sum_{k=0}^{T-1} \mathbf{1}\Big\{1 - I_{{p_{\text{critical},i}}}\big(\alpha_{i,0}+k,  \beta_{i,0} + \nonumber  \\ 
 (T-1-k)\big) > {\tau_{\text{abstain},i}}\Big\}
\cdot \binom{T-1}{k} p_i^k (1-p_i)^{T-1-k}.
\end{align}
\endgroup

Here, 
$\mathbf{1}$ is an indicator function, which equals 1 if the condition is true and 0 otherwise, i.e. we select only those histories (i.e., values of $k$) for which the agent's confidence exceeds the abstention threshold, contributing to the total probability of voting.

\paragraph{Martingale Difference Sequence.}

To bound the probability that the correct alternative wins the majority vote across all agents at the final time step $T$, we analyze the deviation of the total aggregated net public vote from its expected value.
For this purpose, we construct a \textit{Doob martingale}. This construction directly yields a sequence of random variables whose sum represents the total deviation, and whose individual increments (\textit{martingale differences}) have a zero conditional expectation. To formally define this martingale, which is conditioned on the accumulating information in the system, we first need to establish a comprehensive filtration for the entire collective. 
Recall that $\mathcal{H}_k$ is the event-based filtration capturing information from the first $k$ observed private outcomes, which orders all individual private outcomes observed by agents across all trials and which will serve as the underlying filtration.
We define a \textit{Doob martingale} $(M_k)_{k=0, \dots, NT}$ based on $V^{\ost-\odg}_{T}$ and the filtration $\mathcal{H}_k$:

\begin{definition}[Doob Martingale \citep{Doob2001}]
$$M_k = \mathbb{E}_\ost[V^{\ost-\odg}_{T} \mid \mathcal{H}_k]$$
\end{definition}
This martingale has the following properties:
\begin{itemize}
\item At the initial step ($k=0$), $M_0 = \mathbb{E}_\ost[V^{\ost-\odg}_{T} \mid \mathcal{H}_0] = \mathbb{E}_\ost[V^{\ost-\odg}_{T}]$ (since $\mathcal{H}_0$ contains no information about the random outcomes).
\item At the final step ($k=NT$), $M_{NT} = \mathbb{E}_\ost[V^{\ost-\odg}_{T} \mid \mathcal{H}_{NT}] = V^{\ost-\odg}_{T}$ (since all private outcomes up to time $T$ for all agents are observed, $V^{\ost-\odg}_{T}$ is fully determined).
\end{itemize}

To analyze the deviation from the mean, we consider a related martingale, i.e. the centered variant of $M_k$. Let $Z = V^{\ost-\odg}_{T} - \mathbb{E}_\ost[V^{\ost-\odg}_{T}]$ be the centered random variable representing the total deviation of interest. This centering is crucial because the Azuma-Hoeffding inequality operates on a sequence of martingale differences that have a zero conditional mean, a property that follows naturally from the centered martingale: 
$$
M'_k = \mathbb{E}_\ost[Z \mid \mathcal{H}_k] = \mathbb{E}_\ost[V^{\ost-\odg}_{T} - \mathbb{E}_\ost[V^{\ost-\odg}_{T}] \mid \mathcal{H}_k].
$$
This new martingale $M'_k$ directly represents our best estimate of the total deviation, given all the private outcomes revealed up to step $k$.
Notice that:
\begin{itemize}
    \item At the initial step ($k=0$), $M'_0 = \mathbb{E}_\ost[V^{\ost-\odg}_{T} - \mathbb{E}_\ost[V^{\ost-\odg}_{T}] \mid \mathcal{H}_0] = \mathbb{E}_\ost[V^{\ost-\odg}_{T}] - \mathbb{E}_\ost[V^{\ost-\odg}_{T}] = 0$.
    \item At the final step ($k=NT$), $M'_{NT} = \mathbb{E}_\ost[V^{\ost-\odg}_{T} - \mathbb{E}_\ost[V^{\ost-\odg}_{T}] \mid \mathcal{H}_{NT}] = V^{\ost-\odg}_{T} - \mathbb{E}_\ost[V^{\ost-\odg}_{T}]$.
\end{itemize}

The total deviation of interest, $V^{\ost-\odg}_{T} - \mathbb{E}_\ost[V^{\ost-\odg}_{T}]$, can then be expressed as the sum of the \textit{martingale differences} of $M'_k$. This is a direct application of the telescoping sum property, where the sum of successive differences of a sequence simplifies to the difference between its final and initial terms:
$$V^{\ost-\odg}_{T} - \mathbb{E}_\ost[V^{\ost-\odg}_{T}] = M'_{NT} - M'_0 = \sum_{k=1}^{NT} (M'_k - M'_{k-1}).$$

In the next section, we will apply a concentration inequality known as the \textit{Azuma-Hoeffding inequality} to derive a lower bound on $\ost$ winning the majority vote. This requires the sequence $(D_k)_{k=1, \dots, NT}$, where $D_k = M'_k - M'_{k-1}$, to form a \textit{martingale difference sequence}. This means it must satisfy the following properties \citep{ma2018complete}:

\begin{enumerate}
    \item \textit{Integrability:} $\mathbb{E}_\ost[|D_k|] < \infty$.
    \item \textit{Adaptedness:} $D_k$ is measurable with respect to $\mathcal{H}_k$.
    \item \textit{Zero Conditional Mean:} $\mathbb{E}_\ost[D_k \mid \mathcal{H}_{k-1}] = 0$.
\end{enumerate}

Observe that the zero conditional mean property is directly linked to the fair game property: if the expected future value of a martingale (given all past and present information) equals its current value, then the expected change in that martingale must necessarily be zero.
To prepare the Azuma-Hoeffding derivation, we show that:

\begin{proposition}
$(D_k)_{k=1, \dots, NT}$ forms a martingale difference sequence with respect to the filtration $\mathcal{H}_k$.
\end{proposition}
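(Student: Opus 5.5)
We verify the three defining properties directly from the construction $D_k = M'_k - M'_{k-1}$ with $M'_k = \mathbb{E}_\ost[Z \mid \mathcal{H}_k]$. Everything reduces to standard facts about conditional expectation. The only substantive input is that $Z$ is integrable, which we establish first.

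\textbf{Integrability of $Z$.} For every $i$ we have $|V_{i,T}| = |D_{\text{vote},i}(T)\,X_{i,T}| \le 1$, so $|V^{\ost-\odg}_{T}| \le N$. Consequently $|\mathbb{E}_\ost[V^{\ost-\odg}_{T}]| \le N$ and $|Z| \le 2N$. Conditional expectation is a contraction in $L^1$ and preserves almost-sure bounds, so $|M'_k| \le 2N$ a.s. for every $k$. Hence $|D_k| \le 4N$ a.s., which gives integrability. A sharper bound on the increments is not needed here; it is deferred to the Azuma--Hoeffding step.

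\textbf{Adaptedness.} By definition $M'_k$ is $\mathcal{H}_k$-measurable. Since $\mathcal{H}_{k-1} \subseteq \mathcal{H}_k$ (the filtration is increasing because each step reveals one additional private outcome $X_{(k)}$), $M'_{k-1}$ is also $\mathcal{H}_k$-measurable. Therefore $D_k$ is $\mathcal{H}_k$-measurable.

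\textbf{Zero conditional mean.} We use the tower property with $\mathcal{H}_{k-1} \subseteq \mathcal{H}_k$:
\[
\mathbb{E}_\ost[M'_k \mid \mathcal{H}_{k-1}] = \mathbb{E}_\ost\big[\mathbb{E}_\ost[Z \mid \mathcal{H}_k] \,\big|\, \mathcal{H}_{k-1}\big] = \mathbb{E}_\ost[Z \mid \mathcal{H}_{k-1}] = M'_{k-1}.
\]
Since $M'_{k-1}$ is $\mathcal{H}_{k-1}$-measurable, we have $\mathbb{E}_\ost[M'_{k-1} \mid \mathcal{H}_{k-1}] = M'_{k-1}$. Subtracting the two identities yields $\mathbb{E}_\ost[D_k \mid \mathcal{H}_{k-1}] = 0$.

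\textbf{Main subtlety.} The only point needing care is that all expectations are taken under $\mathbb{P}_\ost$, which conditions on $\Omega_T = \ost$, and that $\mathcal{H}_k$ is a genuine filtration on this conditioned space. This holds because $\mathbb{P}_\ost$ is itself a probability measure and $\mathcal{H}_k$ is generated by nested collections of the $X_{(j)}$. Note also that $V^{\ost-\odg}_{T}$ is a function of $X_{i,s}$ for $s \le T$: the gate $D_{\text{vote},i}(T)$ depends only on $X_{i,1},\dots,X_{i,T-1}$ through the Beta update, and $X_{i,T}$ enters directly. Hence $V^{\ost-\odg}_{T}$ is $\mathcal{H}_{NT}$-measurable, which justifies the endpoint identity $M'_{NT} = Z$ used in the telescoping sum. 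The martingale-difference property itself needs no independence assumption; Definition~\ref{independence} only becomes relevant later, when bounding the increments.
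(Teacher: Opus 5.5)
Your proof is correct and follows the paper's argument essentially step for step. Boundedness of $V_T^{\ost-\odg}$ gives integrability, the inclusion $\mathcal{H}_{k-1}\subseteq\mathcal{H}_k$ gives adaptedness, and the tower property gives the zero conditional mean. Your closing remark on $\mathcal{H}_{NT}$-measurability of $V_T^{\ost-\odg}$ is not needed for the proposition, since the Doob construction yields a martingale difference sequence for any integrable $Z$; strictly, that remark also only holds if the revealed outcomes are read as correctness indicators or one conditions on all of $\boldsymbol{\Omega}$, because the Beta update uses $\mathbf{1}\{X_{i,t}=\Omega_t\}$ rather than $X_{i,t}$ alone.
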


\textbf{Proof Sketch.} 
Let $Z:=V_T^{\ost-\odg}-\mathbb E_\ost[V_T^{\ost-\odg}]$ and define the Doob martingale
$M'_k:=\mathbb E_\ost[Z\mid\mathcal H_k]$ with differences $D_k:=M'_k-M'_{k-1}$.
(i) \emph{Integrability:} $V_T^{\ost-\odg}\in[-N,N]$, hence $Z$ and all $M'_k$ are bounded, so
$\mathbb E_\ost[|D_k|]<\infty$.
(ii) \emph{Adaptedness:} $M'_k$ is $\mathcal H_k$–measurable by definition of conditional
expectation; $M'_{k-1}$ is $\mathcal H_{k-1}$–measurable and therefore $\mathcal H_k$–measurable
since $(\mathcal H_k)$ is a filtration; thus $D_k$ is $\mathcal H_k$–measurable.
(iii) \emph{Zero conditional mean:} By the tower property,
$\mathbb E_\ost[D_k\mid\mathcal H_{k-1}]
=\mathbb E_\ost[M'_k\mid\mathcal H_{k-1}]-M'_{k-1}
=\mathbb E_\ost[Z\mid\mathcal H_{k-1}]-M'_{k-1}=0$.
Therefore $(D_k)_{k=1}^{NT}$ is a martingale difference sequence w.r.t.\ $(\mathcal H_k)$.
The full proof can be found in subsection \ref{differencesequence} of the appendix.


\section{Derivation and Simulation of Final Vote Accuracy Guarantees}

To find good tail estimates for the probability of $\ost$ winning the majority vote at the final time step, we utilize the one-sided variant of the Azuma-Hoeffding inequality. More specifically, the objective is to derive a lower bound for the probability $\mathbb{P}_\ost(V^{\ost-\odg}_{T} > 0)$. 

\begin{lemma}[Azuma-Hoeffding inequality \citep{azuma1967weighted}]\label{Azuma}
For a sequence of random variables $(Z_k)_{k=1, \dots, K}$ that forms a martingale difference sequence with respect to a filtration $(\mathcal{H}_k)$, if $|Z_k| \leq c_k$ for all $k$, then for any $\epsilon > 0$:
$$
\mathbb{P}\left(\sum_{k=1}^{K} Z_k \leq -\epsilon\right) \leq e^{-\frac{\epsilon^2}{2 \sum_{k=1}^{K} c_k^2}}
$$
\end{lemma}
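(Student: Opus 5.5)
The plan is to prove the one-sided Azuma--Hoeffding bound by the standard exponential-moment (Chernoff) method, working with the lower tail directly. Set $S_K:=\sum_{k=1}^K Z_k$. For any $\lambda>0$, Markov's inequality applied to $e^{-\lambda S_K}$ gives
\[
\mathbb{P}(S_K\le -\epsilon)=\mathbb{P}\big(e^{-\lambda S_K}\ge e^{\lambda\epsilon}\big)\le e^{-\lambda\epsilon}\,\mathbb{E}\big[e^{-\lambda S_K}\big].
\]
The task is therefore to bound the moment generating function $\mathbb{E}[e^{-\lambda S_K}]$ by $\exp\big(\tfrac{\lambda^2}{2}\sum_k c_k^2\big)$. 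Once this is done, optimizing over $\lambda$ finishes the argument.

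The key ingredient is a conditional version of Hoeffding's lemma: if $Y$ satisfies $|Y|\le c$ and $\mathbb{E}[Y\mid\mathcal{G}]=0$, then $\mathbb{E}[e^{sY}\mid\mathcal{G}]\le e^{s^2c^2/2}$ for every real $s$. I will prove this by convexity of $y\mapsto e^{sy}$ on $[-c,c]$:
\[
e^{sy}\le \frac{c-y}{2c}e^{-sc}+\frac{c+y}{2c}e^{sc}.
\]
Taking conditional expectations and using the zero conditional mean gives $\mathbb{E}[e^{sY}\mid\mathcal G]\le\cosh(sc)$. The Taylor series comparison $(2n)!\ge 2^n n!$ then yields $\cosh(x)\le e^{x^2/2}$. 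This elementary inequality is the only genuinely analytic step, and I expect it to be the main (though mild) obstacle. Everything else is bookkeeping with the tower property.

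Next I peel off one increment at a time. Since $S_{K-1}$ is $\mathcal{H}_{K-1}$-measurable, the tower property gives
\[
\mathbb{E}[e^{-\lambda S_K}]=\mathbb{E}\big[e^{-\lambda S_{K-1}}\,\mathbb{E}[e^{-\lambda Z_K}\mid\mathcal{H}_{K-1}]\big].
\]
The martingale difference properties (adaptedness and zero conditional mean), together with $|Z_K|\le c_K$, let the conditional Hoeffding lemma with $s=-\lambda$ bound the inner factor by $e^{\lambda^2c_K^2/2}$. Iterating down to $k=1$ gives $\mathbb{E}[e^{-\lambda S_K}]\le\exp\big(\tfrac{\lambda^2}{2}\sum_{k=1}^K c_k^2\big)$. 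Integrability is automatic because the $Z_k$ are bounded.

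Finally, combining the two displays yields
\[
\mathbb{P}(S_K\le-\epsilon)\le\exp\Big(-\lambda\epsilon+\tfrac{\lambda^2}{2}\sum_k c_k^2\Big).
\]
Choosing $\lambda=\epsilon/\sum_k c_k^2$, which is the minimizer of this quadratic in $\lambda$, gives the stated bound $e^{-\epsilon^2/(2\sum_k c_k^2)}$. One degenerate case needs a remark: if all $c_k=0$, then $S_K=0$ almost surely and the probability is $0$, so the bound holds trivially.
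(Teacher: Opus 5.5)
Your proof is correct and complete. The Markov step on $e^{-\lambda S_K}$ is valid. So is the conditional Hoeffding lemma via convexity and $\cosh x\le e^{x^2/2}$. The peeling via the tower property uses exactly adaptedness of $Z_1,\dots,Z_{K-1}$ and $\mathbb{E}[Z_k\mid\mathcal{H}_{k-1}]=0$. The optimizing choice $\lambda=\epsilon/\sum_k c_k^2$ gives the stated exponent.

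The paper does not prove this lemma; it imports it by citation from Azuma (1967), and likewise cites the upper-tail version in the appendix. So there is no different route here to compare against. Your write-up is a self-contained version of the standard exponential-moment argument behind that citation.

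Spelling out the proof shows which hypotheses the bound actually uses. One is the need for $\epsilon>0$, so that $\lambda>0$. This matters because Theorem~\ref{maintheorem} applies the lemma with $\epsilon=\mathbb{E}_\ost[V_T^{\ost-\odg}]$, which is positive only under extra assumptions on the $p_i$.

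Two small remarks:
\begin{itemize}
\item Your convexity step divides by $2c_k$. If some individual $c_k=0$ (not only when all of them vanish), note that $Z_k=0$ a.s., so the corresponding conditional factor is exactly $1$.
\item Running your argument on $(-Z_k)_k$ gives the upper-tail inequality used for Corollary~\ref{cor:hallucination-bound} at no extra cost.
\end{itemize}
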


 The sequence $(D_k)_{k=1, \dots, NT}$ forms such a martingale difference sequence with respect to the filtration $(\mathcal{H}_k)$, which we established in Proposition 1.
A central concern when applying the inequality is to show that the martingale difference sequence is, in fact, bounded. This we show in the following lemma:

\begin{lemma}\label{lemmadifferences}
Let $D_k = M'_k - M'_{k-1}$ be the martingale differences with respect to the event-based filtration $(\mathcal{H}_k)_{k=0, \dots, NT}$. For each $k \in \{1, \dots, NT\}$, there exists a bound $c_k$ such that $|D_k| \le c_k$. Specifically, if $X_{(k)}$ corresponds to agent $i$ at time $t^*$:
\begin{enumerate}
\item If $t^* < T$, then $c_k = |2p_{i}-1|$.
\item If $t^* = T$, then $c_k = 2$.
\end{enumerate}
Consequently, the sum of the squares of these bounds across all agents and time steps is:
$$
\sum_{k=1}^{NT} c_k^2 = \sum_{i=1}^{N} \left( (T-1)(2p_i-1)^2 + 4 \right).
$$
\end{lemma}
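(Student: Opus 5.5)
The plan is to compute $D_k$ explicitly by exploiting the product structure of the Doob martingale. Since $Z$ differs from $V^{\ost-\odg}_{T}$ by a constant, $M'_k - M'_{k-1} = \mathbb{E}_\ost[V^{\ost-\odg}_{T}\mid\mathcal H_k]-\mathbb{E}_\ost[V^{\ost-\odg}_{T}\mid\mathcal H_{k-1}]$. By linearity, this is $\sum_{j=1}^N \big(\mathbb{E}_\ost[V_{j,T}\mid\mathcal H_k]-\mathbb{E}_\ost[V_{j,T}\mid\mathcal H_{k-1}]\big)$.

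By Private Signal Independence (Definition~\ref{independence}), $V_{j,T}=D_{\text{vote},j}(T)X_{j,T}$ is a function of agent $j$'s own signals $X_{j,1},\dots,X_{j,T}$ only. These signals are independent of the signals of all other agents. Hence, if $X_{(k)}=X_{i,t^*}$, revealing $X_{(k)}$ leaves the conditional expectation of every $V_{j,T}$ with $j\neq i$ unchanged. Only the $i$-th summand survives:
\[
D_k=\mathbb{E}_\ost[V_{i,T}\mid\mathcal H_k]-\mathbb{E}_\ost[V_{i,T}\mid\mathcal H_{k-1}].
\]
This is the step I expect to need the most care. One must argue cleanly that conditioning on $\mathcal H_k$ reduces, for agent $j$, to conditioning on the revealed part of $j$'s own history. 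This follows from the independence across agents and across times.

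\emph{Case $t^*<T$.} Since $X_{i,T}$ is independent of $\mathcal F_{i,T-1}$ and of all other revealed signals, and $D_{\text{vote},i}(T)$ is $\mathcal F_{i,T-1}$-measurable, the same computation as for $\mathbb E_\ost[V_{i,T}]$ gives
\[
\mathbb{E}_\ost[V_{i,T}\mid\mathcal H_m]=(2p_i-1)\,\mathbb P_\ost\big(D_{\text{vote},i}(T)=1\mid\mathcal H_m\big)\quad\text{for } m\in\{k-1,k\}.
\]
Therefore $D_k=(2p_i-1)(q_k-q_{k-1})$, where $q_m:=\mathbb P_\ost(D_{\text{vote},i}(T)=1\mid\mathcal H_m)\in[0,1]$. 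Moreover, $q_{k-1}$ is a convex combination of the two possible values of $q_k$ corresponding to $X_{(k)}=\pm1$. So $|q_k-q_{k-1}|\le \max q_k-\min q_k\le 1$, and hence $|D_k|\le|2p_i-1|$.

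\emph{Case $t^*=T$.} Because the ordering is time-first, the entire history $X_{i,1},\dots,X_{i,T-1}$ is already in $\mathcal H_{k-1}$, so $D_{\text{vote},i}(T)$ is $\mathcal H_{k-1}$-measurable. Then $\mathbb{E}_\ost[V_{i,T}\mid\mathcal H_k]=D_{\text{vote},i}(T)X_{i,T}$ and $\mathbb{E}_\ost[V_{i,T}\mid\mathcal H_{k-1}]=D_{\text{vote},i}(T)(2p_i-1)$. This gives $|D_k|=D_{\text{vote},i}(T)\,|X_{i,T}-(2p_i-1)|\le 1+|2p_i-1|\le 2$.

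Finally, each agent $i$ contributes $T-1$ indices with $c_k^2=(2p_i-1)^2$ and one index with $c_k^2=4$. Summing over $i$ yields the stated formula $\sum_{k=1}^{NT} c_k^2=\sum_{i=1}^{N}\big((T-1)(2p_i-1)^2+4\big)$.
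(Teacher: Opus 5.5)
Your proposal is correct and follows essentially the same route as the paper's proof. You isolate agent $i$'s summand via cross-agent independence, write the increment as $(2p_i-1)$ times a difference of conditional publish probabilities when $t^*<T$, and as $D_{\mathrm{vote},i}(T)\,(X_{i,T}-(2p_i-1))$ with $|D_k|\le 2$ when $t^*=T$, then count $T-1$ learning reveals and one final reveal per agent; your convex-combination remark and explicit use of the time-first ordering are minor clarifications the paper states more tersely.
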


\textbf{Proof Sketch.} 
Let $M'_k:=\mathbb{E}_\ost[Z\mid\mathcal{H}_k]$ for $Z:=V_T^{\ost-\odg}-\mathbb{E}_\ost[V_T^{\ost-\odg}]$ and
$D_k:=M'_k-M'_{k-1}=\mathbb{E}_\ost[V_{i^*,T}\mid\mathcal{H}_k]-\mathbb{E}_\ost[V_{i^*,T}\mid\mathcal{H}_{k-1}]$
when the revealed outcome is $X_{(k)}=X_{i^*,t^*}$.  
If $t^*<T$, write $V_{i^*,T}=D_{\mathrm{vote},i^*}(T)\,X_{i^*,T}$ with
$X_{i^*,T}\in\{-1,+1\}$ and $D_{\mathrm{vote},i^*}(T)\in\{0,1\}$; by conditional
independence, $\mathbb{E}_\ost[X_{i^*,T}\mid\mathcal{H}_\cdot]=2p_{i^*}-1$, hence
$
D_k=(2p_{i^*}-1)\big(\mathbb{E}_\ost[D_{\mathrm{vote},i^*}(T)\mid\mathcal{H}_k]
-\mathbb{E}_\ost[D_{\mathrm{vote},i^*}(T)\mid\mathcal{H}_{k-1}]\big),
$
a difference of two probabilities, so $|D_k|\le |2p_{i^*}-1|$.
If $t^*=T$, then $\mathbb{E}_\ost[V_{i^*,T}\mid\mathcal{H}_k]=V_{i^*,T}$ and
$\mathbb{E}_\ost[V_{i^*,T}\mid\mathcal{H}_{k-1}]=D_{\mathrm{vote},i^*}(T)(2p_{i^*}-1)$, giving
$D_k=D_{\mathrm{vote},i^*}(T)\,(X_{i^*,T}-(2p_{i^*}-1))$ with $|D_k|\le 2$.
Summing squares over all events yields
$\sum_{k=1}^{NT}c_k^2=\sum_{i=1}^N\big((T-1)(2p_i-1)^2+4\big)$.
The full proof can be found in subsection \ref{bounded} of the appendix.


Intuitively, each increment 
is the one-step change in our running forecast 
when a single private outcome is revealed. 
If the revealed outcome belongs to an earlier learning round ($t^*<T$), it cannot change the final private vote $X_{i^*,T}$; it only shifts the \textit{publish probability} for agent $i^*$ through $D_{\mathrm{vote},i^*}(T)$. 
If the revealed outcome is the final one ($t^*=T$), it collapses the remaining uncertainty about $V_{i^*,T}\in\{-1,0,1\}$, moving a conditional mean by at most $2$. From this, we can derive our main theorem:


\begin{theorem}\label{maintheorem}
Consider a majority voting setting for $N$ agents over a learning period over $T-1$ rounds, and a final voting round $T$ for agents with individual competence values $p_i$,     prior pseudo–counts $(\alpha_{i,0},\beta_{i,0})$ for a Beta belief about its own reliability $p_i$,
 a critical competence level $p_{\mathrm{critical},i}\in(0,1)$ it aims to exceed, and an abstention threshold $\tau_{\mathrm{abstain},i}\in(0,1)$: the minimum posterior probability of exceeding $p_{\mathrm{critical},i}$ required to vote publicly. We assume private signals are conditionally independent across agents and time given the sequence of ground truths \(\boldsymbol{\Omega}\) (Def. \ref{independence}).
Then it is guaranteed that the success probability of the 
majority vote is at least  
\begin{align}\label{finalbound}
1 - e^{\left(-\frac{\left(\sum_{i=1}^{N} (2p_i-1) \mathbb{E}_\ost[D_{\text{vote},i}(T)]\right)^2}{2 \sum_{i=1}^{N} \left( (T-1)(2p_i-1)^2 + 4 \right)}\right)}.
\end{align}
\end{theorem}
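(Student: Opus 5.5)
The argument rests on three earlier results: Proposition 1, Lemma \ref{lemmadifferences}, and the one-sided Azuma--Hoeffding inequality (Lemma \ref{Azuma}). We condition on $\Omega_T=\ost$; the case $\Omega_T=\odg$ is symmetric after relabelling. Under this conditioning, success of the majority vote means exactly the event $\{V^{\ost-\odg}_T>0\}$, because ties count as non-wins. We therefore bound the complementary event $\{V^{\ost-\odg}_T\le 0\}$ from above.

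Write $\mu:=\mathbb{E}_\ost[V^{\ost-\odg}_T]=\sum_{i=1}^N (2p_i-1)\,\mathbb{E}_\ost[D_{\text{vote},i}(T)]$, which is the expectation computed in the previous section. By the telescoping identity for the centered Doob martingale,
\[
V^{\ost-\odg}_T-\mu=\sum_{k=1}^{NT} D_k .
\]
Hence
\[
\{V^{\ost-\odg}_T\le 0\}=\Big\{\sum_{k=1}^{NT} D_k\le -\mu\Big\}.
\]
Proposition 1 shows that $(D_k)$ is a martingale difference sequence with respect to $(\mathcal H_k)$. Lemma \ref{lemmadifferences} supplies bounds $|D_k|\le c_k$ with
\[
\sum_{k=1}^{NT} c_k^2=\sum_{i=1}^N\big((T-1)(2p_i-1)^2+4\big).
\]
Applying Lemma \ref{Azuma} with $\epsilon=\mu$ then gives
\[
\mathbb{P}_\ost\big(V^{\ost-\odg}_T\le 0\big)\le \exp\!\Big(-\tfrac{\mu^2}{2\sum_i((T-1)(2p_i-1)^2+4)}\Big).
\]
Taking complements yields \eqref{finalbound}.

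The main obstacle is that Lemma \ref{Azuma} requires $\epsilon>0$, that is, $\mu>0$. The bound is informative only in that regime, and I would make this explicit by treating the two cases separately.

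\emph{Case $\mu>0$.} The derivation above applies directly.

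\emph{Case $\mu\le 0$.} Here the displayed expression is not automatically a valid lower bound, since $\mu^2$ could be large while $\mu$ is negative. I would therefore state the theorem under the hypothesis $\mu>0$, or equivalently read it with $\mu$ replaced by $\max(\mu,0)$. With that reading, the case $\mu=0$ gives the trivial bound $1-e^0=0$, which holds vacuously.

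A secondary check is that the denominator is strictly positive. It is at least $4N>0$, so the exponent is always well defined.

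Finally, the conditioning on the earlier ground truths $\Omega_1,\dots,\Omega_{T-1}$ does not affect the argument. The distribution of the correctness indicators is Bernoulli$(p_i)$ regardless of the realized truths, so the per-round bounds in Lemma \ref{lemmadifferences} and the expectation $\mu$ are the same for every sequence $\boldsymbol{\Omega}$. Consequently the bound holds unconditionally as well.
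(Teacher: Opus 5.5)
Your proposal is correct and is essentially the paper's proof: telescope the centered Doob martingale, invoke Proposition 1 and Lemma~\ref{lemmadifferences}, apply Lemma~\ref{Azuma} with $\epsilon=\mu=\mathbb{E}_\ost[V^{\ost-\odg}_T]$, and take complements. Your caveat is warranted rather than pedantic: the paper applies Azuma with this $\epsilon$ without checking its sign, and the statement as printed genuinely fails when $\mu<0$ (e.g.\ agents with $p_i=0$ whose optimistic priors make them publish have success probability $0$, yet the displayed expression is positive), so the hypothesis $\mu>0$ or the $\max(\mu,0)$ reading is needed, and your remark on averaging over $\Omega_1,\dots,\Omega_{T-1}$ makes explicit a point the paper leaves implicit.
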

\textbf{Proof.} 
From Lemma \ref{Azuma} and Lemma \ref{lemmadifferences}, we obtain: 
\begin{align}
\mathbb{P}_\ost&({V}^{\ost-\odg}_{T} > 0) \nonumber \\
& = 1 - \mathbb{P}_\ost({V}^{\ost-\odg}_{T} \le 0) \nonumber \\
& = 1 - \mathbb{P}_\ost\left({V}^{\ost-\odg}_{T} - \mathbb{E}_\ost[{V}^{\ost-\odg}_{T}] \le -\mathbb{E}_\ost[{V}^{\ost-\odg}_{T}]\right) \nonumber \\
& \expl{We apply the Azuma-Hoeffding inequality with $\epsilon = \mathbb{E}_\ost[{V}^{\ost-\odg}_{T}]$, } \nonumber \\ & \expl{$Z_k=D_k$, and $\sum c_k^2 = \sum_{i=1}^{N} \left( (T-1)(2p_i-1)^2 + 4 \right)$.} \nonumber \\
& \geq 1 - e^{\left(-\frac{(\mathbb{E}_\ost[{V}^{\ost-\odg}_{T}])^2}{2 \sum_{i=1}^{N} \left( (T-1)(2p_i-1)^2 + 4 \right)}\right)} \nonumber \\
& \expl{Substitute the expected total net public vote (derived previously):} \nonumber \\ & \quad \expl{$\mathbb{E}_\ost[{V}^{\ost-\odg}_{T}] = \sum_{i=1}^{N} \mathbb{E}_\ost[V_{i,T}] = \sum_{i=1}^{N} (2p_i-1) \mathbb{E}_\ost[D_{\text{vote},i}(T)].$} \nonumber \\
& = 1 - e^{\left(-\frac{\left(\sum_{i=1}^{N} (2p_i-1) \mathbb{E}_\ost[D_{\text{vote},i}(T)]\right)^2}{2 \sum_{i=1}^{N} \left( (T-1)(2p_i-1)^2 + 4 \right)}\right)}. 
\end{align}

This derived bound quantifies the minimum probability of winning the majority vote at the final time step. We briefly recall that $\mathbb{E}_\ost[D_{\text{vote},i}(T)]$, i.e. the probability of an agent to vote publicly,  can be computed from the model parameters alone, as outlined in Equation \ref{expectation}, by summing over all possible learning histories, each weighted by its binomial probability. 
Recall that we see \emph{hallucination} as a false positive: accepting an \emph{invalid} item \citep{kalai2025}. Applying the same Azuma–Hoeffding derivation under $\Omega_T=\odg$ (see subsection \ref{halluproof} in the appendix for the full proof), we obtain an upper bound on collective hallucination:

\begin{corollary}[Collective Hallucination Bound]\label{cor:hallucination-bound}
\begingroup\small
\[
\mathbb P_{\odg}\!\big(V_T^{\ost-\odg}>0\big)\ \le\
e^{\!\left(
-\frac{\Big(\sum_{i=1}^{N}(2p_i-1)\,\mathbb{E}_{\odg}[D_{\mathrm{vote},i}(T)]\Big)^2}
{2\sum_{i=1}^{N}\big((T-1)(2p_i-1)^2+4\big)}
\right)}.
\]
\endgroup
\end{corollary}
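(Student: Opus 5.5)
The plan is to mirror the proof of \Cref{maintheorem}, conditioning throughout on $\Omega_T=\odg$ instead of $\Omega_T=\ost$. Write $\mathbb P_\odg(\cdot):=\mathbb P(\cdot\mid\Omega_T=\odg)$, and define the Doob martingale $M'_k:=\mathbb E_\odg[Z\mid\mathcal H_k]$ with $Z:=V_T^{\ost-\odg}-\mathbb E_\odg[V_T^{\ost-\odg}]$ and differences $D_k=M'_k-M'_{k-1}$. The argument of Proposition 1 never used which state was conditioned on; it used only boundedness of $V_T^{\ost-\odg}$, measurability and the tower property. So $(D_k)$ is again a martingale difference sequence with respect to $(\mathcal H_k)$.

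\textbf{Step 1: the mean.} I will recompute the mean under $\odg$. Since $\odg$ is encoded as $-1$, a correct private choice now means $X_{i,T}=-1$. Hence $\mathbb E_\odg[X_{i,T}\mid\mathcal F_{i,T-1}]=-(2p_i-1)$, and the same total-expectation computation as before gives
\[
\mathbb E_\odg[V_T^{\ost-\odg}]=-\sum_{i=1}^N(2p_i-1)\,\mathbb E_\odg[D_{\mathrm{vote},i}(T)]=:-\epsilon .
\]
The probability $\mathbb E_\odg[D_{\mathrm{vote},i}(T)]$ is given by the same binomial sum \eqref{expectation}. This holds because the gate depends only on the correctness indicators of rounds $t<T$, which are Bernoulli$(p_i)$ regardless of the labels.

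\textbf{Step 2: the increments.} I will redo Lemma~\ref{lemmadifferences} under $\odg$.
\begin{itemize}
\item For $t^*<T$, $D_k=-(2p_{i^*}-1)$ times a difference of two conditional probabilities of publishing, so $|D_k|\le|2p_{i^*}-1|$.
\item For $t^*=T$, $D_k=D_{\mathrm{vote},i^*}(T)\big(X_{i^*,T}+(2p_{i^*}-1)\big)$, so $|D_k|\le 2$.
\end{itemize}
The sum of squares is therefore unchanged, namely $\sum_i\big((T-1)(2p_i-1)^2+4\big)$.

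\textbf{Step 3: the tail bound.} Hallucination is the upper-tail event
\[
\{V_T^{\ost-\odg}>0\}=\{Z>\epsilon\}\subseteq\Big\{\textstyle\sum_k(-D_k)\le-\epsilon\Big\}.
\]
The sequence $(-D_k)$ is also a martingale difference sequence with the same bounds $c_k$. Applying Lemma~\ref{Azuma} to it with this $\epsilon$ yields exactly the stated exponential.

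The main obstacle is the requirement $\epsilon>0$ in Lemma~\ref{Azuma}. It holds when the gated electorate has positive expected margin $\sum_i(2p_i-1)\mathbb E_\odg[D_{\mathrm{vote},i}(T)]$. The same implicit requirement already appears in \Cref{maintheorem}. I would state it explicitly as the regime in which the bound is meaningful. If $\epsilon\le 0$, the right-hand side does not follow from Lemma~\ref{Azuma}, and the bound must be read under that positivity assumption. The sign bookkeeping in Steps 1 and 2 is routine but is where care is needed.
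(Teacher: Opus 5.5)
Your proposal is correct and follows essentially the same route as the paper, which conditions on $\Omega_T=\odg$, reuses the martingale and the bounds $c_k$ of Lemma~\ref{lemmadifferences}, and applies the upper-tail form of Azuma--Hoeffding directly to $(D_k)$ with $\epsilon=-\mathbb E_\odg[V_T^{\ost-\odg}]$; this is equivalent to your application of Lemma~\ref{Azuma} to $(-D_k)$, and your re-derivation of the increments under $\odg$ is slightly more careful than the paper's bare citation of the lemma. Your positivity caveat is warranted: the paper simply asserts $-\mathbb E_\odg[V_T^{\ost-\odg}]\ge 0$, and without that condition the inequality can genuinely fail, e.g.\ for a publishing electorate with all $p_i<1/2$, where the left side tends to $1$ as $N\to\infty$ while the right side tends to $0$.
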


Equation \ref{finalbound} not only allows us to assess the success probability of multiple agents in practice, but also lets us derive the classical Condorcet conclusion: the probability that majority voting selects the correct alternative converges to \(1\) as the number of agents grows. Our convergence result relies on two assumptions. First, the average competence must stay bounded away from $0.5$ (i.e., $\bar{p} \ge 0.5 + \Delta p$). Second, we assume that the gate does not become arbitrarily strict for new agents. That is, for every competent agent in the sequence, there is a lower-bounded probability of publishing. We refer to this as a \textit{uniformly nondegenerate gate}.

Let \(n:=T-1\). For agent \(i\), let \(K_i\) denote the number of correct private outcomes across the \(n\) learning trials.
\begin{definition}[Uniformly Nondegenerate Gate]
Fix $n=T-1$. 

Let $q_i(p) := \mathbb{P}(\text{Agent } i \text{ publishes} \mid \text{competence } p)$. The gate is \textit{uniformly nondegenerate} if there exists a constant $q_{\min} > 0$ such that for all agents $i$ and all $p \ge 0.5$:
\[
q_i(p) \ge q_{\min}.
\]
\end{definition}

\begin{theorem}
Consider a sequence of agents in the majority voting setting of Theorem \ref{maintheorem}. Fix the horizon $T \geq 2$. Assume that for the sequence of agents:
(1) The average competence satisfies $\frac{1}{N}\sum_{i=1}^N p_i \ge 0.5 + \Delta p$ for some $\Delta p > 0$; and
(2) The gate is uniformly nondegenerate.
Then, the probability of the electorate to identify the correct alternative in the final voting round $T$ converges to 1 as $N \to \infty$.
\end{theorem}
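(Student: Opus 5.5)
The plan is to feed the non-asymptotic bound \eqref{finalbound} of Theorem \ref{maintheorem} into a growth comparison in $N$. With $T$ fixed, I would show that the exponent
\[
\frac{\bigl(\sum_{i=1}^{N}(2p_i-1)\,\mathbb{E}_\ost[D_{\text{vote},i}(T)]\bigr)^2}{2\sum_{i=1}^{N}\bigl((T-1)(2p_i-1)^2+4\bigr)}
\]
diverges as $N\to\infty$. The denominator is easy: since $(2p_i-1)^2\le 1$, it is at most $2N(T+3)$, which is linear in $N$. So it suffices to show that the numerator's inner sum is bounded below by $cN$ for some constant $c>0$ independent of $N$. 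Then the exponent is at least $c^2N/(2(T+3))\to\infty$, and the success probability, bounded below by $1-e^{-c^2N/(2(T+3))}$, tends to $1$.

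First I would identify $\mathbb{E}_\ost[D_{\text{vote},i}(T)]$ with $q_i(p_i)$. The gate $D_{\text{vote},i}(T)$ depends only on the learning outcomes $X_{i,1},\dots,X_{i,T-1}$. By private signal independence (Def.~\ref{independence}) and the i.i.d.\ task assumption, their correctness indicators are i.i.d.\ Bernoulli$(p_i)$ regardless of $\Omega_T$. Hence equation \eqref{expectation} gives exactly $q_i(p_i)=\mathbb{P}(K_i\in S_i)$, where $K_i\sim\mathrm{Bin}(n,p_i)$ and $S_i$ is the set of success counts $k$ that clear the gate.

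Next I would prove a monotonicity lemma: $S_i$ is an upper set in $k$, so $q_i$ is nondecreasing in $p$. Moving from $(\alpha_0+k,\beta_0+n-k)$ to $(\alpha_0+k+1,\beta_0+n-k-1)$ makes the Beta law stochastically larger, so $1-I_{p_{\text{critical},i}}$ increases with $k$. The usual binomial coupling then shows that $\mathbb{P}(K_i\in S_i)$ is nondecreasing in $p_i$.

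With monotonicity in hand, I would split the sum into competent and incompetent agents. For $p_i\ge 0.5$, the term $(2p_i-1)q_i(p_i)$ is at least $(2p_i-1)q_i(0.5)$. For $p_i<0.5$, the factor $2p_i-1$ is negative and $q_i(p_i)\le q_i(0.5)$, so again the term is at least $(2p_i-1)q_i(0.5)$. Hence
\[
\sum_i(2p_i-1)q_i(p_i)\ \ge\ \sum_i(2p_i-1)\,q_i(0.5).
\]

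The main obstacle is the last step. If all agents share the same gate parameters (prior, $p_{\text{critical}}$, $\tau_{\text{abstain}}$), then $q_i(0.5)=q(0.5)\ge q_{\min}$ is common, and by assumption (1) the right-hand side is at least $q_{\min}\cdot 2N\Delta p$. So $c=2q_{\min}\Delta p$ works and the argument closes.

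With fully heterogeneous gates, the weights $q_i(0.5)\in[q_{\min},1]$ can differ across agents. Incompetent agents could then be weighted near $1$ and competent ones near $q_{\min}$, and the average-competence condition alone would not force positivity. I would first try to bound this away using uniform nondegeneracy. If that fails, I would state the theorem under a common gate, or under the stronger hypothesis $\frac1N\sum_i(2p_i-1)q_i(p_i)\ge c>0$, and then conclude exactly as above.
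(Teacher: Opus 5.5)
Your argument is correct for a common gate, and it departs from the paper's proof at exactly the decisive step. The paper also splits the agents into competent and incompetent ones. It bounds the competent weights below by $q_{\min}$ and the incompetent weights above by $1$, which gives $q_{\min}A+B$ with $A=\sum_{p_i\ge 1/2}(2p_i-1)$ and $B=\sum_{p_i<1/2}(2p_i-1)\le 0$. It then ``factors out'' $q_{\min}$ to obtain $q_{\min}(A+B)$. That step is invalid: since $B\le 0$ and $q_{\min}\le 1$, one has $q_{\min}A+B\le q_{\min}(A+B)$, so the inequality runs the wrong way and the margin $2q_{\min}N\Delta p$ is never established. Your route compares every weight with the single value $q_i(1/2)$, using monotonicity of $q_i$ in $p$. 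The paper proves the threshold rule in $k$ but never exploits it this way. Your comparison gives a lower bound of the same form, $(2p_i-1)q_i(1/2)$, for both groups at once, and it legitimately recovers the paper's rate $1-\exp\bigl(-2\Delta p^2q_{\min}^2N/(T+3)\bigr)$. It also shows that, for a common gate, nondegeneracy is needed only at $p=1/2$. The paper's appendix lemma fixes ``the gate parameters $(\alpha_0,\beta_0,p_{\mathrm{crit}},\tau)$'' without an agent index, so your restricted version is the statement the paper can actually support.

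For heterogeneous gates, drop the plan of bounding the problem away with nondegeneracy. The statement is false there, even when only the priors differ, which is exactly the paper's contrary-prior scenario. Take $T=2$ and $p_{\mathrm{critical},i}=\tau_{\mathrm{abstain},i}=1/2$ for all $i$.
\begin{itemize}
\item Half the agents have $p_i=0.7$ and prior $(1,1)$. Since $\mathrm{Beta}(2,1)$ and $\mathrm{Beta}(1,2)$ give confidences $3/4$ and $1/4$, these agents publish iff their single learning outcome was correct, so $q_i(p)=p$.
\item The other half have $p_i=0.33$ and prior $(5,1)$. Even after a failure their confidence is $1-I_{1/2}(5,2)=57/64$, so they always publish and $q_i\equiv 1$.
\end{itemize}
Then $\bar p=0.515$ and the gate is uniformly nondegenerate with $q_{\min}=1/2$. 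Yet each pair contributes $0.4\cdot 0.7-0.34=-0.06$ to the expected net vote. The $V_{i,T}$ are bounded and independent across agents, since each is a function of that agent's own correctness indicators. Hoeffding's inequality therefore gives $\mathbb{P}_{\omega^*}\bigl(V_T^{\omega^*-\omega_\dagger}>0\bigr)\to 0$ exponentially.

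So state the theorem in one of three ways:
\begin{itemize}
\item for a common gate;
\item under your margin hypothesis;
\item under the intermediate condition $\sup_{i:\,p_i<1/2}q_i(1/2)\le\inf_{j:\,p_j\ge 1/2}q_j(1/2)$.
\end{itemize}
Under the third, your comparison goes through with that infimum, which is at least $q_{\min}$, in place of $q(1/2)$.
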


\textbf{Proof Sketch.} 
We proceed in \textit{four} steps:
(i) Under the distribution-level reliability model, the number of learning successes $K_i$ follows a Binomial distribution;
(ii) We show that the posterior confidence is strictly non-decreasing in $K_i$, implying the publish rule is a \textit{success-count threshold} ($K_i \ge k_i^\star$);
(iii) Using the uniform non-degeneracy and average competence assumptions, we lower-bound the expected vote margin;
(iv) Substituting this margin into the Azuma-Hoeffding bound yields an error term that decays exponentially with $N$.
The full proof is in: Appendix \ref{convergence}.

\begin{figure*}[t]
  \centering
  \begin{subfigure}[t]{0.48\textwidth}
    \centering
    \includegraphics[width=\linewidth]{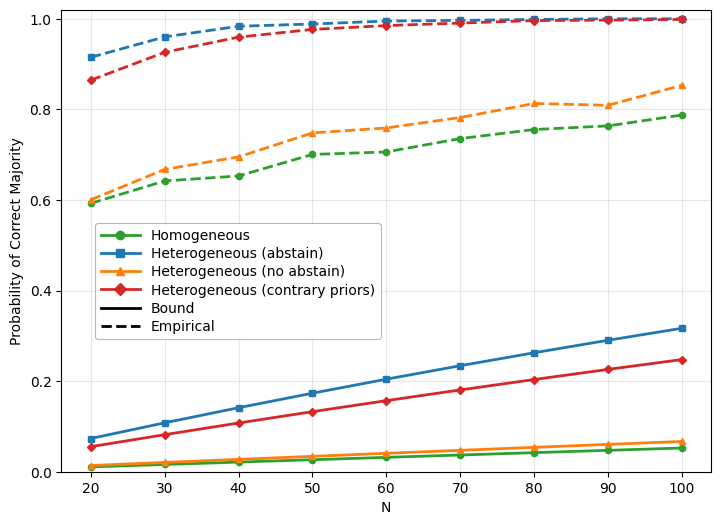}
    \caption{$T = 20, \bar{p} = 0.55, {{p_{\text{critical},i}} = 0.5},{\tau_{\text{abstain},i} = 0.5}$.}
    \label{fig:bound1a}
  \end{subfigure}
  \hfill
  \begin{subfigure}[t]{0.48\textwidth}
    \centering
    \includegraphics[width=\linewidth]{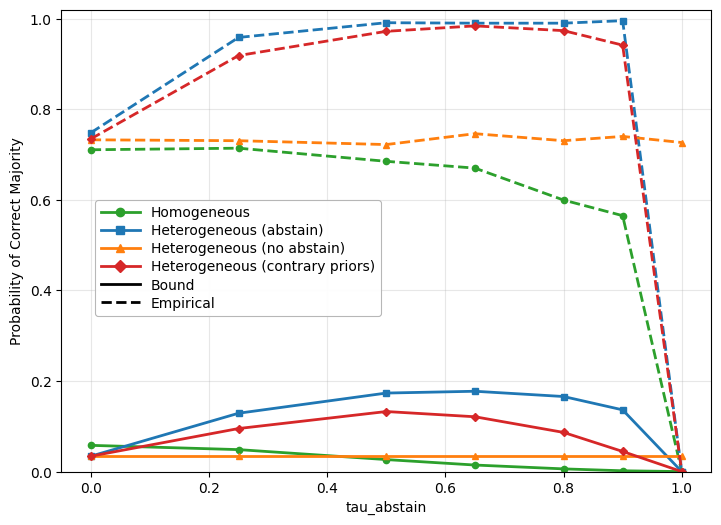}
    \caption{$T = 20, \bar{p} = 0.55, {{p_{\text{critical},i}} = 0.5}, N = 50$.}
    \label{fig:bound1b}
  \end{subfigure}
  \caption{Empirical Simulation and Equation \ref{finalbound} visualization under two configurations.}
  \label{fig:bound1}
\end{figure*}

\subsection{Empirical Simulations}

In this subsection, we empirically assess the bound from Theorem \ref{maintheorem} by Monte Carlo simulation of the full process, and compare this to the Azuma–Hoeffding lower bound (Equation \ref{finalbound}) computed from the same parameters. For a pseudocode description of the implementation, we refer the reader to subsection \ref{pseudocode} of the appendix. 
In each simulation round we: (i) draw all private outcomes for all agents over the $T\!-\!1$ learning tasks; (ii) update each agent’s Beta counts; (iii) apply the gate at time $T$ using $p_{\mathrm{critical},i}$ and $\tau_{\mathrm{abstain},i}$; (iv) sample each publishing agent’s final private vote on $\Omega_T$; and (v) record whether the final majority is correct. Repeating this $2000$ times yields the empirical success probability (the share of rounds in which the correct alternative wins).
We consider four agent pools, accounting for different competence distributions, and their interaction with the abstention gate, across two parameter configurations. 
That is, (i) Homogeneous agents (identical competencies); (ii) Heterogeneous agents with a prescribed ${\tau_{\text{abstain},i}}$-value, and uniform Beta priors ($(\alpha_{i,0},\beta_{i,0})=(1,1)$); (iii) Heterogeneous agents that never abstain ($\tau_{\mathrm{abstain}}=0$) as a baseline comparison; (iv) Heterogeneous agents with priors chosen contrary to their true reliability.
That is, setting (iv) studies miscalibrated Beta priors, implemented as follows: For agent $i$ with true reliability $p_i$, choose a prior strength $\kappa>0$ (total pseudo–count) and set (for $\epsilon = 10^{-6}$ to ensure $\alpha,\beta>0$ for numerical stability):
$
\alpha_{i,0}\ :=\ \max\!\big(\kappa(1-p_i),\,\varepsilon\big),
\beta_{i,0}\ :=\ \max\!\big(\kappa p_i,\,\varepsilon\big).
$ That is, competent agents ($p_i > 0.5$) start \textit{pessimistic}, and incompetent agents ($p_i < 0.5$) start \textit{optimistic}. For all settings, we assume an average reliability of $\bar{p} = 0.55$, $T = 20$ trials, and $p_{\mathrm{critical},i}=0.5$. Heterogeneity is implemented by assigning $p_i\in\{0.35,\,0.75\}$ to equal-sized halves, so that $\bar p=0.55$. 

In Figure \ref{fig:bound1a}, we plot the bound and empirical success probability as the number of agents ($N$) varies, prescribing $\tau_{\mathrm{abstain},i}=0.5$ for all agents. In Figure \ref{fig:bound1b}, we plot against a common $\tau_{\mathrm{abstain}}$ for $N = 50$ agents. 
Across both settings, the gated models (blue: heterogeneous with uniform priors; red: heterogeneous with contrary priors) dominate the non-abstaining baseline (orange) and the homogeneous pool (green): selective abstention suppresses low-competence voters, so the publishing electorate is, in expectation, stronger, leading to both a higher empirical success rate, and theoretical minimum success probability. An exception appears in Fig.~\ref{fig:bound1b} for very large gates: once $\tau_{\mathrm{abstain}}$ exceeds what even the better half ($p_i=0.75$) can typically certify, many agents stop publishing and performance degrades, while the non-abstaining baseline remains roughly flat.
In both plots, the empirical success rates (dashed) lie well above the theoretical curves (solid). This is expected as \Cref{finalbound} is a \textit{worst-case} concentration bound, protecting against the most unfavorable realizations by only using coarse information about the process.

\section{Summary and Future Work}

In this work, we introduced a probabilistic framework to analyze collective decision-making with endogenous participation. By employing martingales and filtrations, we captured the dynamic evolution of agents' confidence and their resulting decision to vote or abstain. Our primary contribution is the derivation of a non-asymptotic lower bound on the probability of a correct majority, alongside an asymptotic guarantee that extends the classical CJT to settings with heterogeneous agents and confidence-based abstention.

This framework offers a theoretical bridge between two distinct views on abstention: the \textit{strategic} view of social choice theory (where agents abstain to avoid pivoting the vote incorrectly) and the \textit{epistemic} view of statistical learning (where agents abstain due to low confidence). We showed that ``calibrated'' agents who simply maximize their own precision naturally improve the collective information state.

For future work, we plan to explore three avenues. First, from a theoretical standpoint, we aim to derive tighter concentration bounds by incorporating the variance structure of the martingale difference sequence (e.g., using \textit{Freedman's inequality}). Second, regarding social choice axioms, we wish to relax the independence assumption to model correlated information sources, a common scenario in committee deliberations. Finally, we plan to apply this framework to the design of hybrid intelligence systems, using Large Language Models (LLMs) as calibrated agents to empirically validate how confidence-gated voting mitigates group-level hallucination in collective decision-making. This will involve: (i) instantiating the agents as individual LLMs or as different reasoning paths of a single LLM, (ii) using a feedback loop to simulate the learning process and update their competence beliefs, and (iii) evaluating the success of our confidence-based abstention mechanism in mitigating hallucinations on a range of binary decision tasks.

\section{Acknowledgments}
This work is partly supported by BMFTR (Federal Ministry of Research, Technology and Space) in DAAD project 57616814 (SECAI, School of Embedded Composite AI, https://secai.org/) as part of the program Konrad Zuse Schools of Excellence in Artificial Intelligence.

\section*{Appendix} 

\subsection{Proof for Proposition 1.}\label{differencesequence}

We verify that $(D_k)_{k=1, \dots, NT}$ forms a martingale difference sequence with respect to the filtration $(\mathcal{H}_k)$. Recall that we let 

 $$Z = V^{\ost-\odg}_{T} - \mathbb{E}_\ost[V^{\ost-\odg}_{T}]$$
 
 be the centered random variable representing the total deviation of interest and that we can then construct a Doob martingale for this centered variable:

$$
M'_k = \mathbb{E}_\ost[Z \mid \mathcal{H}_k] = \mathbb{E}_\ost[V^{\ost-\odg}_{T} - \mathbb{E}_\ost[V^{\ost-\odg}_{T}] \mid \mathcal{H}_k]
$$

Each $D_k = M'_k - M'_{k-1}$ represents the change in the conditional expectation of the centered sum of final votes $V^{\ost-\odg}_{T} - \mathbb{E}_\ost[V^{\ost-\odg}_{T}]$ when the $k$-th private outcome is revealed. 

We show each property individually.

\medskip

1. \textit{Integrability ($\mathbb{E}_\ost[|D_k|] < \infty$)}:
   By the triangle inequality, $|D_k| = |M'_k - M'_{k-1}| \le |M'_k| + |M'_{k-1}|$.
   Recall that $M'_k = \mathbb{E}_\ost[V^{\ost-\odg}_{T} - \mathbb{E}_\ost[V^{\ost-\odg}_{T}] \mid \mathcal{H}_k]$.
   The random variable $V^{\ost-\odg}_{T}$ is a sum of $N$ terms (agents' votes at time $T$), each bounded within $\{-1, 0, 1\}$. Thus, $V^{\ost-\odg}_{T}$ is bounded within $[-N, N]$.
   Consequently, the centered variable $V^{\ost-\odg}_{T} - \mathbb{E}_\ost[V^{\ost-\odg}_{T}]$ is also bounded. Specifically, its values lie within $[-N - \mathbb{E}_\ost[V^{\ost-\odg}_{T}], N - \mathbb{E}_\ost[V^{\ost-\odg}_{T}]]$. Since $\mathbb{E}_\ost[V^{\ost-\odg}_{T}]$ is itself bounded within $[-N, N]$, the range of the centered variable is at most $[-2N, 2N]$.
   The conditional expectations $M'_k = \mathbb{E}_\ost[V^{\ost-\odg}_{T} - \mathbb{E}_\ost[V^{\ost-\odg}_{T}] \mid \mathcal{H}_k]$ and $M'_{k-1} = \mathbb{E}_\ost[V^{\ost-\odg}_{T} - \mathbb{E}_\ost[V^{\ost-\odg}_{T}] \mid \mathcal{H}_{k-1}]$ are therefore also bounded random variables. A bounded random variable has a finite expected absolute value. Consequently, $\mathbb{E}_\ost[|D_k|]$ is finite.

\medskip

2. \textit{Adaptedness ($D_k$ is measurable with respect to $\mathcal{H}_k$):}

A random variable $X$ is said to be $\mathcal{F}$-measurable if, for every Borel set $B$ in the measurable space $(\mathbb{R}, \mathcal{B}(\mathbb{R}))$, where $\mathcal{B}(\mathbb{R})$ refers to the collection of all subsets of $\mathbb{R}$ that satisfy the properties of a $\sigma$-algebra (containing the empty set and $\mathbb{R}$, and being closed under complementation and countable unions), and thus are precisely the sets to which a probability measure can be consistently assigned, the preimage $X^{-1}(B) = \{ s \in \mathsf S \mid X(s) \in B \}$ is an element of the $\sigma$-algebra $\mathcal{F}$ \citep{durrett2019probability}.
In essence, this means that for any possible value or range of values that $X$ can take, we can determine, solely by observing the events in $\mathcal{F}$, whether $X$ falls into that value or range.

In our specific context, the random variables $M'_k$ and $M'_{k-1}$ (and thus $D_k$) take on values that are conditional expectations of the centered variable $V^{\ost-\odg}_{T} - \mathbb{E}_\ost[V^{\ost-\odg}_{T}]$. A key property of conditional expectation is that $\mathbb{E}_\ost[Y \mid \mathcal{H}]$ is always $\mathcal{H}$-measurable for any random variable $Y$ for which the expectation exists \citep{durrett2019probability}.

Let's demonstrate why $D_k = M'_k - M'_{k-1}$ is $\mathcal{H}_k$-measurable. For $D_k$ to be $\mathcal{H}_k$-measurable, both $M'_k$ and $M'_{k-1}$ must be $\mathcal{H}_k$-measurable.

\begin{itemize}
 \item \textit{Measurability of $M'_k$ with respect to $\mathcal{H}_k$:}
 By the very definition of a Doob martingale, $M'_k = \mathbb{E}_\ost[V^{\ost-\odg}_{T} - \mathbb{E}_\ost[V^{\ost-\odg}_{T}] \mid \mathcal{H}_k]$ is, by construction, $\mathcal{H}_k$-measurable. Therefore, for any Borel set $B \in \mathcal{B}(\mathbb{R})$, the event $\{s \in \mathsf S \mid M'_k(s) \in B\}$ is an element of $\mathcal{H}_k$.

 \item \textit{Measurability of $M'_{k-1}$ with respect to $\mathcal{H}_k$:}
The random variable $M'_{k-1} = \mathbb{E}_\ost[V^{\ost-\odg}_{T} - \mathbb{E}_\ost[V^{\ost-\odg}_{T}] \mid \mathcal{H}_{k-1}]$ is, by definition, $\mathcal{H}_{k-1}$-measurable. Since $(\mathcal{H}_k)_{k=0, \dots, NT}$ is a filtration, it satisfies the property that $\mathcal{H}_{k-1} \subseteq \mathcal{H}_k$. This implies that any random variable that is measurable with respect to the smaller $\sigma$-algebra $\mathcal{H}_{k-1}$ is also measurable with respect to the larger $\sigma$-algebra $\mathcal{H}_k$. This is the case since for every Borel set $B$, it holds that if $X^{-1}(B) \in \mathcal{H}_{k-1}$, then $X^{-1}(B) \in \mathcal{H}_k$. Therefore, $M'_{k-1}$ is $\mathcal{H}_k$-measurable.
\end{itemize}

Since both $M'_k$ and $M'_{k-1}$ are $\mathcal{H}_k$-measurable random variables, their difference, $D_k = M'_k - M'_{k-1}$, is also $\mathcal{H}_k$-measurable. This follows from the fact that for any two measurable functions, their difference is also measurable \citep{salamon2016measure}.
This confirms that the sequence $(D_k)_{k=1, \dots, NT}$ is {adapted} to the filtration $(\mathcal{H}_k)_{k=0, \dots, NT}$.

\medskip

3. \textit{Zero Conditional Mean} ($\mathbb{E}_\ost[D_k \mid \mathcal{H}_{k-1}] = 0$):
 We expand $\mathbb{E}_\ost[D_k \mid \mathcal{H}_{k-1}]$:
\begin{align*}
\mathbb{E}_\ost[D_k \mid \mathcal{H}_{k-1}] &= \mathbb{E}_\ost[M'_k - M'_{k-1} \mid \mathcal{H}_{k-1}] \\
 &\expl{\text{(By linearity of conditional expectation)}} \\
 &= \mathbb{E}_\ost[M'_k \mid \mathcal{H}_{k-1}] - \mathbb{E}_\ost[M'_{k-1} \mid \mathcal{H}_{k-1}]
\end{align*}
Since $M'_{k-1}$ is measurable with respect to $\mathcal{H}_{k-1}$ (i.e., known given $\mathcal{H}_{k-1}$), $\mathbb{E}_\ost[M'_{k-1} \mid \mathcal{H}_{k-1}] = M'_{k-1}$.
For the first term, by the tower property of conditional expectation ($\mathbb{E}_\ost[\mathbb{E}_\ost[Y \mid \mathcal{H}_k] \mid \mathcal{H}_{k-1}] = \mathbb{E}_\ost[Y \mid \mathcal{H}_{k-1}]$ for any random variable $Y$ given that the filtration is ordered sequentially):
\begin{align*}
\mathbb{E}_\ost[M'_k \mid \mathcal{H}_{k-1}] &= \mathbb{E}_\ost[\mathbb{E}_\ost[V^{\ost-\odg}_{T} - \mathbb{E}_\ost[V^{\ost-\odg}_{T}] \mid \mathcal{H}_k] \mid \mathcal{H}_{k-1}] \\
    &= \mathbb{E}_\ost[V^{\ost-\odg}_{T} - \mathbb{E}_\ost[V^{\ost-\odg}_{T}] \mid \mathcal{H}_{k-1}] \\
    &= M'_{k-1}
\end{align*}
 Substituting these back, we find:
\[
\mathbb{E}_\ost[D_k \mid \mathcal{H}_{k-1}] = M'_{k-1} - M'_{k-1} = 0
\]
Thus, all three conditions are satisfied, confirming that $(D_k)$ is indeed a martingale difference sequence with respect to $(\mathcal{H}_k)$.

\subsection{Proof for Lemma 2.}\label{bounded}
Recall that we let 

 $$Z = V^{\ost-\odg}_{T} - \mathbb{E}_\ost[V^{\ost-\odg}_{T}]$$
 
 be the centered random variable representing the total deviation of interest and that we can then construct a Doob martingale for this centered variable:

$$
M'_k = \mathbb{E}_\ost[Z \mid \mathcal{H}_k] = \mathbb{E}_\ost[V^{\ost-\odg}_{T} - \mathbb{E}_\ost[V^{\ost-\odg}_{T}] \mid \mathcal{H}_k]
$$

Each $D_k = M'_k - M'_{k-1}$ represents the change in the conditional expectation of the centered sum of final votes $V^{\ost-\odg}_{T} - \mathbb{E}_\ost[V^{\ost-\odg}_{T}]$ when the $k$-th private outcome is revealed. 

As a general strategy to determine $c_k$ for $|D_k|$, we determine the maximal change in expectation we can obtain from a single martingale difference. More specifically, we consider a specific but arbitrarily chosen event $X_{(k)}$ in the event-based filtration and take into account the time of its occurrence.

Let $(i^*, t^*)$ denote the specific agent and time step corresponding to the $k$-th private outcome, $X_{(k)} = X_{i^*, t^*}$. The filtration $\mathcal{H}_k$ contains all information in $\mathcal{H}_{k-1}$ plus the outcome of $X_{(k)}$.

The variable $M'_k$ is the conditional expectation of $\sum_{j=1}^N (V_{j,T} - \mathbb{E}_\ost[V_{j,T}])$. When $X_{(k)} = X_{i^*, t^*}$ is revealed, it only affects the conditional expectation of $V_{i^*,T}$ (and consequently the term $V_{i^*,T} - \mathbb{E}_\ost[V_{i^*,T}]$). The conditional expectations of other agents' votes $V_{j,T}$ (for $j \neq i^*$) are independent of $X_{(k)}$ given $\mathcal{H}_{k-1}$ and thus their contribution to the change is zero.
Thus, the martingale difference $D_k$ simplifies to:
$$
D_k = \mathbb{E}_\ost[V_{i^*,T} - \mathbb{E}_\ost[V_{i^*,T}] \mid \mathcal{H}_k] - \mathbb{E}_\ost[V_{i^*,T} - \mathbb{E}_\ost[V_{i^*,T}] \mid \mathcal{H}_{k-1}]
$$
Since $\mathbb{E}_\ost[V_{i^*,T}]$ is a constant, i.e. an unconditional expectation (which we can denote as $\mu = \mathbb{E}_\ost[V_{i^*,T}]$), and recalling the linearity property of conditional expectation ($\mathbb{E}_\ost[A-B \mid \mathcal{F}] = \mathbb{E}_\ost[A \mid \mathcal{F}] - \mathbb{E}_\ost[B \mid \mathcal{F}]$) and that the conditional expectation of a constant is the constant itself ($\mathbb{E}_\ost[\mu \mid \mathcal{F}] = \mu$), we can expand the terms:
\begin{align*}
D_k &= \mathbb{E}_\ost[V_{i^*,T} - \mu \mid \mathcal{H}_k] - \mathbb{E}_\ost[V_{i^*,T} - \mu \mid \mathcal{H}_{k-1}] \\
    &= ( \mathbb{E}_\ost[V_{i^*,T} \mid \mathcal{H}_k] - \mathbb{E}_\ost[\mu \mid \mathcal{H}_k] ) 
    \\ & \quad - \left( \mathbb{E}_\ost[V_{i^*,T} \mid \mathcal{H}_{k-1}] - \mathbb{E}_\ost[\mu \mid \mathcal{H}_{k-1}] \right) \\
    &= \left( \mathbb{E}_\ost[V_{i^*,T} \mid \mathcal{H}_k] - \mu \right) - \left( \mathbb{E}_\ost[V_{i^*,T} \mid \mathcal{H}_{k-1}] - \mu \right) \\
    &= \mathbb{E}_\ost[V_{i^*,T} \mid \mathcal{H}_k] - \mu - \mathbb{E}_\ost[V_{i^*,T} \mid \mathcal{H}_{k-1}] + \mu \\
    &= \mathbb{E}_\ost[V_{i^*,T} \mid \mathcal{H}_k] - \mathbb{E}_\ost[V_{i^*,T} \mid \mathcal{H}_{k-1}]
\end{align*} 
The individual net public vote $V_{i^*,T}$ for agent $a_{i^*}$ at time $T$ is bounded within $\{-1, 0, 1\}$. Consequently, any conditional expectation of $V_{i^*,T}$ must also lie within the range $[-1, 1]$.

We now establish the bounds $c_k$ for $|D_k|$ based on the time step $t^*$ of the revealed outcome $X_{(k)}$:

\begin{enumerate}
    \item \textbf{If $X_{(k)}$ corresponds to agent $i^*$ at time $t^* < T$:}
    In this case, $X_{(k)} = X_{i^*, t^*}$ is a private choice made by agent $i^*$ at a time step prior to the final vote.
    We have $V_{i^*,T} = D_{\text{vote},i^*}(T) \cdot X_{i^*,T}$.
    The term $X_{i^*,T}$ represents agent $i^*$'s private choice at time $T$. 
    By our distribution-level reliability assumption, private choices are independent across trials. 
    This means $X_{i^*,T}$ is independent of all past private outcomes (including $X_{i^*,t^*}$ and the entire filtration $\mathcal{H}_k$) and $D_{\text{vote},i^*}(T)$ (which is determined by prior information), since the distribution of $X_{i^*,T}$ depends only on $\Omega_T$ (here $\Omega_T=\ost$) and not on past history.  Therefore, we can write: 

\begin{align*}
& \mathbb{E}_\ost[V_{i^*,T} \mid \mathcal{H}_k] \\
&= \mathbb{E}_\ost[D_{\text{vote},i^*}(T) \cdot X_{i^*,T} \mid \mathcal{H}_k] \\
&= \mathbb{E}_\ost[D_{\text{vote},i^*}(T) \cdot \mathbb{E}_\ost[X_{i^*,T} \mid D_{\text{vote},i^*}(T), \mathcal{H}_k] \mid \mathcal{H}_k] \\
&\quad \expl{Since $X_{i^*,T}$ is independent of all past outcomes} \\ & \quad \expl{which determine $D_{\text{vote},i^*}(T)$ and $\mathcal{H}_k$, the inner expectation simplifies to:} \\
&= \mathbb{E}_\ost[D_{\text{vote},i^*}(T) \cdot (2p_{i^*}-1) \mid \mathcal{H}_k] \\
&\quad \expl{Then, as $(2p_{i^*}-1)$ is a constant, it can be factored out of the outer expectation.} \\
&= (2p_{i^*}-1) \mathbb{E}_\ost[D_{\text{vote},i^*}(T) \mid \mathcal{H}_k]
\end{align*}

Here, we made use of the \textit{Tower Property of Conditional Expectation}. This property states that for any random variable $Y$ and nested $\sigma$-algebras $\mathcal{G}_1 \subseteq \mathcal{G}_2$:
$$ \mathbb{E}_\ost[Y \mid \mathcal{G}_1] = \mathbb{E}_\ost[\mathbb{E}_\ost[Y \mid \mathcal{G}_2] \mid \mathcal{G}_1] $$
In our derivation, $Y$ is the product $D_{\text{vote},i^*}(T) \cdot X_{i^*,T}$. We set $\mathcal{G}_1 = \mathcal{H}_k$ and choose $\mathcal{G}_2 = \sigma(D_{\text{vote},i^*}(T), \mathcal{H}_k)$. This specific choice of $\mathcal{G}_2$ is made to ensure that $D_{\text{vote},i^*}(T)$ is measurable with respect to the inner conditioning set, allowing it to be factored out from the inner expectation.

    The same logic applies to $\mathbb{E}_\ost[V_{i^*,T} \mid \mathcal{H}_{k-1}]$. Substituting these into the expression for $D_k$:
    
    \begin{align*}
    D_k &= (2p_{i^*}-1) \mathbb{E}_\ost[D_{\text{vote},i^*}(T) \mid \mathcal{H}_k] \\
    & \quad- (2p_{i^*}-1) \mathbb{E}_\ost[D_{\text{vote},i^*}(T) \mid \mathcal{H}_{k-1}] \\
    &= (2p_{i^*}-1) ( \mathbb{E}_\ost[D_{\text{vote},i^*}(T) \mid \mathcal{H}_k] \\ & \quad - \mathbb{E}_\ost[D_{\text{vote},i^*}(T) \mid \mathcal{H}_{k-1}] ).
    \end{align*}
    The terms $\mathbb{E}_\ost[D_{\text{vote},i^*}(T) \mid \mathcal{H}_k]$ and $\mathbb{E}_\ost[D_{\text{vote},i^*}(T) \mid \mathcal{H}_{k-1}]$ are conditional probabilities of $D_{\text{vote},i^*}(T)$ (a binary variable that is $0$ or $1$). Thus, they both lie in the interval $[0,1]$. The maximum possible absolute difference between any two values in this interval is $1 - 0 = 1$.
    Therefore, $|D_k| \leq |2p_{i}-1| \cdot 1 = |2p_{i}-1|$. We set $c_k = |2p_{i}-1|$. Note that $0 \le |2p_{i}-1| \le 1$.

    \item \textbf{If $X_{(k)}$ corresponds to agent $i^*$ at time $t^* = T$:}
    In this scenario, $X_{(k)}$ is $X_{i^*,T}$, the actual private choice of agent $i^*$ at the final time step $T$. When this outcome is revealed at step $k$, it directly determines the value of $V_{i^*,T}$ for agent $i^*$ (since $D_{\text{vote},i^*}(T)$ is already determined by information up to time $T-1$, making it $\mathcal{H}_{k-1}$-measurable).
    Specifically:
    \begin{itemize}
        \item For $\mathcal{H}_k$ we obtain: $\mathbb{E}_\ost[V_{i^*,T} \mid \mathcal{H}_k] = V_{i^*,T}$, because $X_{i^*,T}$ (and thus $V_{i^*,T}$) is now known given $\mathcal{H}_k$.
        \item For $\mathcal{H}_{k-1}$ we obtain: 
        \begin{align*}
        \mathbb{E}_\ost[V_{i^*,T} \mid \mathcal{H}_{k-1}] &= \mathbb{E}_\ost[D_{\text{vote},i^*}(T) \cdot X_{i^*,T} \mid \mathcal{H}_{k-1}] \\ &= D_{\text{vote},i^*}(T) \cdot \mathbb{E}_\ost[X_{i^*,T} \mid \mathcal{H}_{k-1}] \\ &= D_{\text{vote},i^*}(T) \cdot (2p_{i^*}-1). 
        \end{align*}
        This holds because $D_{\text{vote},i^*}(T)$ is $\mathcal{H}_{k-1}$-measurable, and $X_{i^*,T}$ is independent of $\mathcal{H}_{k-1}$ by the independence of task $\Omega_T$ from past tasks. 
    \end{itemize}
    Thus, $$D_k = V_{i^*,T} - D_{\text{vote},i^*}(T) \cdot (2p_{i^*}-1).$$
    
    Substituting $V_{i^*,T} = D_{\text{vote},i^*}(T) \cdot X_{i^*,T}$:
    
    \begin{align*}
    D_k &= D_{\text{vote},i^*}(T) \cdot X_{i^*,T} - D_{\text{vote},i^*}(T) \cdot (2p_{i^*}-1) \\ &= D_{\text{vote},i^*}(T) \cdot (X_{i^*,T} - (2p_{i^*}-1)).
    \end{align*}
    
    We consider the two possible values of $D_{\text{vote},i^*}(T)$:
    \begin{itemize}
        \item If $D_{\text{vote},i^*}(T) = 0$, then $D_k = 0$, so $|D_k| \le 2$ holds.
        \item If $D_{\text{vote},i^*}(T) = 1$, then $D_k = X_{i^*,T} - (2p_{i^*}-1)$.
        Since $X_{i^*,T} \in \{-1, 1\}$ and $p_{i^*} \in [0,1]$:
        \begin{itemize}
            \item If $X_{i^*,T} = 1$, $D_k = 1 - (2p_{i^*}-1) = 2 - 2p_{i} = 2(1-p_{i^*})$.
            \item If $X_{i^*,T} = -1$, $D_k = -1 - (2p_{i^*}-1) = -2p_{i^*}$.
        \end{itemize}
        The maximum absolute value of these terms is $\max(2(1-p_{i^*}), 2p_{i^*})$, which occurs at $p_{i^*}=0$ or $p_{i^*}=1$, and is equal to $2$.
        Thus, $|D_k| \le 2$.
    \end{itemize}
    We set $c_k = 2$.
\end{enumerate}

Now, we calculate the sum of squares of these bounds over all $NT$ events. We can group terms based on whether the private outcome $X_{(k)}$ occurred before time $T$ ($t^* < T$) or at time $T$ ($t^* = T$).
Let $K_{<T}$ be the set of indices $k$ where the corresponding time $t^* < T$, and $K_{=T}$ be the set of indices $k$ where the corresponding time $t^* = T$.

For $k \in K_{<T}$, we established that the bound $c_k = |2p_{i}-1|$. There are $(T-1)$ time steps (from $t=1$ to $t=T-1$) for each of the $N$ agents where this case applies. So, there are $N(T-1)$ such terms.
For $k \in K_{=T}$, we established that the bound $c_k = 2$. This case applies for each of the $N$ agents at the final time step $t=T$. So, there are $N$ such terms.

The sum of squares of these bounds over all $NT$ events is therefore:
$$\sum_{k=1}^{NT} c_k^2 = \sum_{k \in K_{<T}} c_k^2 + \sum_{k \in K_{=T}} c_k^2$$
Substituting the derived bounds for $c_k$:
$$\sum_{k=1}^{NT} c_k^2 = \sum_{i=1}^{N} \sum_{t=1}^{T-1} (2p_i-1)^2 + \sum_{i=1}^{N} 2^2$$
In the first sum, for each agent $i$, the term $(2p_i-1)^2$ appears $(T-1)$ times (once for each time step from $1$ to $T-1$). In the second sum, for each agent $i$, the term $2^2=4$ appears once (for the final time step $T$). This simplifies to:
$$\sum_{k=1}^{NT} c_k^2 = \sum_{i=1}^{N} \left( (T-1)(2p_i-1)^2 + 4 \right)$$

\subsection{Proof for the Hallucination Bound.}\label{halluproof}

In this proof, we utilize the upper-tail variant of the Azuma-Hoeffding inequality, restating Lemma \ref{Azuma} from above \citep{azuma1967weighted}:

Let $(Z_k)_{k=1}^K$ be a martingale difference sequence w.r.t.\ $(\mathcal H_k)$ with $|Z_k|\le c_k$ a.s.\ for all $k$. Then, for any $\epsilon>0$,
\begin{align*}
\mathbb P\!\left(\sum_{k=1}^{K} Z_k \ge \epsilon\right)\ \le\ 
\exp\!\left(-\frac{\epsilon^2}{2\sum_{k=1}^{K} c_k^2}\right).  
\end{align*}


For this derivation, we condition on $\Omega_T = \odg$. From Lemma \ref{Azuma} and Lemma \ref{lemmadifferences}, we obtain the following derivation:

\begin{align}
\mathbb{P}_\odg&\big({V}^{\ost-\odg}_{T} > 0\big) \nonumber \\
& \expl{{We subtract $\mathbb{E}_\odg[{V}^{\ost-\odg}_{T}]$ from both sides.}} \nonumber \\
& =  \mathbb{P}_\odg\!\left({V}^{\ost-\odg}_{T} - \mathbb{E}_\odg[{V}^{\ost-\odg}_{T}] \;\ge\; -\,\mathbb{E}_\odg[{V}^{\ost-\odg}_{T}]\right) \nonumber \\
& \expl{Apply the (upper-tail) Azuma--Hoeffding bound with $\epsilon = -\,\mathbb{E}_\odg[{V}^{\ost-\odg}_{T}]\ge 0$, $Z_k=D_k$,} \nonumber \\ & \expl{and $\sum c_k^2 = \sum_{i=1}^{N} \big( (T-1)(2p_i-1)^2 + 4 \big)$.} \nonumber \\
& \leq e^{\left(-\frac{\big(\mathbb{E}_\odg[{V}^{\ost-\odg}_{T}]\big)^2}{2 \sum_{i=1}^{N} \big( (T-1)(2p_i-1)^2 + 4 \big)}\right)} \nonumber \\
& \expl{Substitute the expected total net public vote under $\Omega_T=\odg$:} \nonumber \\
& \quad \expl{$\displaystyle \mathbb{E}_\odg[{V}^{\ost-\odg}_{T}] = \sum_{i=1}^{N} \mathbb{E}_\odg[V_{i,T}]
= - \sum_{i=1}^{N} (2p_i-1)\,\mathbb{E}_\odg[D_{\text{vote},i}(T)]$} \nonumber \\
& = e^{\left(-\frac{\left(\sum_{i=1}^{N} (2p_i-1)\,\mathbb{E}_\odg[D_{\text{vote},i}(T)]\right)^2}{2 \sum_{i=1}^{N} \big( (T-1)(2p_i-1)^2 + 4 \big)}\right)}. \label{eq:hallucination-bound}
\end{align}

\subsection{Proof for Theorem 3.}\label{convergence}

As before, we condition on the final task's true state $\Omega_T=\ost$ and write $\mathbb P_\ost(\cdot):=\mathbb P(\cdot\mid \Omega_T=\ost)$ and $\mathbb E_\ost[\cdot]$ accordingly. Recall the one-sided Azuma--Hoeffding lower bound derived in Equation \ref{finalbound}. We seek to show that as $N \to \infty$, the probability of a correct majority converges to 1.

We recall the underlying parameters: $V_T^{\ost-\odg}=\sum_{i=1}^N V_{i,T}$ is the final net vote, $p_i\in[0,1]$ is agent $i$'s distribution-level reliability, and $D_{\mathrm{vote},i}(T)\in\{0,1\}$ is the publication decision. The term $(2p_i-1)$ represents the agent's expected margin.

Let $n:=T-1$ be the number of learning rounds. The agent's decision to publish at time $T$ depends on the number of correct predictions $K_i$ observed during learning.

To prove convergence, we proceed in \textit{four} steps:

(i) establish that the number of successes $K_i$ follows a Binomial distribution;
(ii) show that the confidence $\mathcal{C}_{i,T}$ is non-decreasing in $K_i$, implying the publish rule is a \textit{success-count threshold};
(iii) bound the expected publish probability and vote margin using the uniform non-degeneracy assumption;
(iv) apply these bounds to the Azuma-Hoeffding result to show exponential convergence.

\smallskip
\textit{Step 1: Distribution of Successes.}
Under the \textit{distribution-level reliability} definition, the outcomes of the $n$ learning tasks are i.i.d.\ Bernoulli($p_i$) trials. Thus, for any agent $i$, the success count $K_i$ follows a binomial distribution:
\[
K_i \sim \mathrm{Binomial}(n,p_i).
\]

\smallskip
\textit{Step 2: Monotonicity and Threshold Rule.}
After observing $K_i=k$ successes, the posterior is $\Psi_{i,T} \sim \mathrm{Beta}(\alpha_{i,0}+k,\ \beta_{i,0}+n-k)$. The agent publishes if its confidence $g_i(k) := \mathbb P(\Psi_{i,T} > p_{\mathrm{critical},i})$ exceeds $\tau_{\mathrm{abstain},i}$.
For a fixed critical threshold $x$, the regularized incomplete beta function $I_x(\alpha, \beta)$ is decreasing in $\alpha$ and increasing in $\beta$. Since observing more successes ($k \uparrow$) increases $\alpha$ and decreases $\beta$, the tail probability $g_i(k) = 1 - I_{p_{\mathrm{critical},i}}(\dots)$ is strictly non-decreasing in $k$.

Consequently, the condition $g_i(k) > \tau_{\mathrm{abstain},i}$ defines a \textit{success-count threshold}: there exists some $k_i^\star$ such that the agent publishes if and only if $K_i \ge k_i^\star$.

\smallskip
\textit{Step 3: Bounding the Publish Probability.}

We define the publish probability for agent $i$ as a function of their competence $p$:
\[
q_i(p) := \mathbb{P}(K_i \ge k_i^\star \mid p) = \sum_{k=k_i^\star}^{n}\binom{n}{k}p^k(1-p)^{n-k}.
\]

\begin{lemma}[Convergence]
\label{lem:exp-conv-ug}
Fix the horizon $T$ and the gate parameters ($\alpha_0, \beta_0, p_{\mathrm{crit}}, \tau$). Consider a sequence of agents $1, \dots, N$ satisfying two conditions:
\begin{enumerate}
    \item \textbf{Average Competence:} The average reliability exceeds $1/2$ by a margin $\Delta p > 0$:
    \[ \frac{1}{N}\sum_{i=1}^N p_i \ge \frac{1}{2} + \Delta p. \]
    \item \textbf{Uniform Non-degeneracy:} The gate is \textit{uniformly nondegenerate}, meaning there exists a lower bound $q_{\min} > 0$ such that for all agents $i$ and all $p \ge 1/2$, the probability of publishing is at least $q_{\min}$ (i.e., $\inf_{i} q_i(p_i) \ge q_{\min}$ for competent agents).
\end{enumerate}
Then, as $N \to \infty$,
\begin{equation}
\label{eq:exp-conv-ug}
\mathbb P_\ost\!\big(V_T^{\ost-\odg}>0\big)
\ \ge\
1-\exp\!\left(-\frac{2\,\Delta p^2\,q_{\min}^{\,2}}{T+3}\,N\right)
\ \xrightarrow{}\ 1 .
\end{equation}
\end{lemma}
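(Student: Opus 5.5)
The plan is to specialize the non-asymptotic bound of Theorem \ref{maintheorem} (Equation \ref{finalbound}). I would lower-bound its numerator and upper-bound its denominator separately, each by a quantity linear in $N$. The exponent then becomes linear in $N$, which gives the stated rate.

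\textbf{Denominator.} This part is routine. Since $p_i\in[0,1]$, we have $(2p_i-1)^2\le 1$. Hence
\[
\sum_{i=1}^N\big((T-1)(2p_i-1)^2+4\big)\le N(T-1+4)=N(T+3).
\]

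\textbf{Numerator.} Here I need a lower bound on $\mathbb E_\ost[V_T^{\ost-\odg}]=\sum_i(2p_i-1)\,q(p_i)$. Because the gate parameters $(\alpha_0,\beta_0,p_{\mathrm{crit}},\tau)$ and the horizon are fixed and common to all agents, Step 2 gives a single success-count threshold $k^\star$. Therefore all agents share one publish function
\[
q(p)=\sum_{k\ge k^\star}\binom{n}{k}p^k(1-p)^{n-k}.
\]
This binomial upper tail is nondecreasing in $p$, which I would justify by the standard coupling or stochastic-dominance argument.

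\textbf{Main obstacle and how to handle it.} Incompetent agents ($p_i<1/2$) contribute negative terms $(2p_i-1)q(p_i)$, while uniform non-degeneracy only controls $q$ on $[1/2,1]$. I would handle this with a pivot at $p=1/2$. For every $p\in[0,1]$,
\[
(2p-1)\,q(p)\ \ge\ (2p-1)\,q(1/2).
\]
If $p\ge 1/2$, both $2p-1\ge 0$ and $q(p)\ge q(1/2)$ hold. If $p<1/2$, then $2p-1<0$ and $q(p)\le q(1/2)$, so multiplying by the negative factor reverses the inequality in the right direction. Summing over agents and using assumption (1) gives
\[
\sum_i(2p_i-1)q(p_i)\ \ge\ q(1/2)\sum_i(2p_i-1)\ \ge\ q(1/2)\cdot 2\Delta p\,N\ \ge\ 2\Delta p\,q_{\min}N .
\]
The last step uses $q(1/2)\ge q_{\min}$, which is uniform non-degeneracy applied at $p=1/2$. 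In particular the expected margin is strictly positive. This is exactly what is needed to apply Lemma \ref{Azuma} with $\epsilon=\mathbb E_\ost[V_T^{\ost-\odg}]>0$, as in the proof of Theorem \ref{maintheorem}.

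\textbf{Conclusion.} The function $x\mapsto 1-e^{-x^2/(2S)}$ is increasing in $x\ge 0$ and increasing in $1/S$. I would substitute the numerator lower bound and the denominator upper bound into Equation \ref{finalbound}:
\[
\mathbb P_\ost(V_T^{\ost-\odg}>0)\ \ge\ 1-\exp\!\Big(-\frac{(2\Delta p\,q_{\min}N)^2}{2N(T+3)}\Big)=1-\exp\!\Big(-\frac{2\Delta p^2q_{\min}^2}{T+3}N\Big).
\]
With $T$, $\Delta p$ and $q_{\min}$ fixed, the right-hand side tends to $1$ as $N\to\infty$.

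The only delicate point is the pivot inequality. It needs a common monotone publish function for all agents, which the fixed gate parameters provide. If gate parameters were allowed to vary across agents, I would instead require each $q_i$ to be nondecreasing and bounded by a common value at $1/2$.
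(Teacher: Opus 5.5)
Your proof is correct. It matches the paper on the denominator bound $\sum_i\big((T-1)(2p_i-1)^2+4\big)\le N(T+3)$ and on the final substitution. The difference is in the numerator, and there your route is not just different: it is what actually makes the lemma go through.

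The paper sets $A:=\sum_{i:\,p_i\ge 1/2}(2p_i-1)\ge 0$ for competent agents and $B:=\sum_{i:\,p_i<1/2}(2p_i-1)\le 0$ for incompetent ones. It bounds $q_i(p_i)\ge q_{\min}$ on the former and $q_i(p_i)\le 1$ on the latter, obtaining $q_{\min}A+B$. It then ``factors out'' $q_{\min}$ to reach $q_{\min}(A+B)$. That step is invalid: for $B<0$ and $q_{\min}<1$ one has $q_{\min}A+B<q_{\min}(A+B)$, so the inequality points the wrong way. For example, with half the agents at $0.75$ and half at $0.35$, $q_{\min}A+B<0$ as soon as $q_{\min}<0.6$, so the paper's intermediate bound is useless there.

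Your pivot $(2p-1)q(p)\ge(2p-1)q(1/2)$ replaces the crude bound $q(p_i)\le 1$ for incompetent agents by $q(p_i)\le q(1/2)$. This uses the fact that a common gate yields a single nondecreasing binomial tail $q$. That monotone ordering is exactly the missing ingredient, and it is genuinely needed. With agent-specific gates, incompetent agents could publish with probability close to $1$ while competent ones publish with probability near $q_{\min}$, and then the expected margin is negative.

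Your explicit check that the margin is strictly positive, so that Azuma--Hoeffding is applied with $\epsilon>0$, is also something the paper leaves implicit.

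One small remark: uniform non-degeneracy might be read in the weaker parenthetical form of the statement, i.e.\ $q(p_i)\ge q_{\min}$ only at the competent agents' actual competences. In that case the same argument works with the pivot moved from $1/2$ to $\min\{p_i : p_i\ge 1/2\}$.
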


\begin{proof}
Recall the exponent from the Azuma-Hoeffding lower bound (Eq.~\ref{finalbound}). We analyze the numerator and denominator separately.

\textbf{Denominator:} The variance term for each agent is bounded by $(T-1)(2p_i-1)^2 + 4 \le (T-1) + 4 = T+3$. Summing over $N$ agents:
\[
\sum_{i=1}^{N} c_i^2 \le N(T+3).
\]

\textbf{Numerator:} The expected collective vote is $\sum_{i=1}^N (2p_i-1)\mathbb{E}_\ost[D_{\mathrm{vote},i}(T)]$.
Note that $\mathbb{E}_\ost[D_{\mathrm{vote},i}(T)] = q_i(p_i)$.
We split the sum into competent ($p_i \ge 1/2$) and incompetent ($p_i < 1/2$) agents.
For incompetent agents, $(2p_i-1) < 0$, so their contribution to the sum is negative (or zero).
For competent agents, we use the uniform non-degeneracy assumption: $q_i(p_i) \ge q_{\min}$.
Thus:
\begin{align*}
\sum_{i=1}^N (2p_i-1)q_i(p_i) 
&\ge \sum_{i: p_i \ge 1/2} (2p_i-1)q_i(p_i) + \sum_{i: p_i < 1/2} (2p_i-1)q_i(p_i) \\
&\ge q_{\min} \sum_{i: p_i \ge 1/2} (2p_i-1) + \sum_{i: p_i < 1/2} (2p_i-1) \cdot 1 \\
&\expl{(Since $q_i \le 1$ and terms are negative, replacing $q_i$ with 1 lowers the sum)} \\
&= q_{\min} \left( \sum_{i: p_i \ge 1/2} (2p_i-1) + \sum_{i: p_i < 1/2} (2p_i-1) \right) \\
&\expl{(Assuming $q_{\min} \le 1$, we can factor it out safely)} \\
&= 2 q_{\min} \sum_{i=1}^N (p_i - 1/2).
\end{align*}
Using the average competence assumption $\sum (p_i - 1/2) = N(\bar{p} - 1/2) \ge N \Delta p$, we get:
\[
\text{Numerator} \ge 2 q_{\min} N \Delta p.
\]

\smallskip
\textit{Step 4: Convergence.}

Substituting these into the Azuma-Hoeffding bound:
\[
\mathbb P_\ost > 1 - \exp\left( - \frac{(2 q_{\min} N \Delta p)^2}{2 \cdot N(T+3)} \right) = 1 - \exp\left( - \frac{2 \Delta p^2 q_{\min}^2}{T+3} N \right).
\]
Since $\Delta p, q_{\min} > 0$ and $T$ is finite, the exponent grows linearly with $N$, driving the error probability to 0.
\end{proof}

\subsection{Pseudocode for Empirical Evaluation.}\label{pseudocode}

We evaluate the framework using three routines: a per–agent
gate-and-contribution calculator (Alg.~1), a lower bound aggregator (Alg.~2),
and a Monte–Carlo simulator of the full sequential process (Alg.~3). 

We vary a single control parameter (e.g.\ $N$, $T$, $\tau_{\mathrm{abstain}}$) while
keeping the others fixed, and report \textit{four} scenarios: (i) homogeneous agents ($p_i\equiv 0.55$);
(ii) heterogeneous with abstention (half $p=0.35$, half $p=0.75$); (iii) the same heterogeneous
pool with abstention disabled; and (iv) heterogeneous with \textit{contrary (competence-inverse) priors}
used during learning. For each setting we plot the probabilistic lower bound from
Algs.~1--2 alongside the Monte–Carlo estimate from Alg.~3. Consistently, the gate filters
low-competence agents, increasing the expected margin and tightening the lower bound; both
empirical success and the certificate improve with larger $N$ and $T$.

\paragraph{Contrary priors (used in scenario (iv)).}
For agent $i$ with true reliability $p_i$, we set the Beta prior mean to $1-p_i$ with strength $\kappa>0$, and we clip Beta pseudo–counts below by $\epsilon = 10^{-6}$ to ensure $\alpha,\beta>0  $ for numerical stability:
\[
\alpha_{i,0}\ :=\ \max\!\big(\kappa(1-p_i),\,\varepsilon\big),\qquad
\beta_{i,0}\ :=\ \max\!\big(\kappa p_i,\,\varepsilon\big),
\]
so competent agents start pessimistic and weak agents optimistic. (When not using contrary priors
we employ aligned, shared priors $(\alpha_0,\beta_0)$ for all agents.) 
For instance, with $\kappa=8$ and a competent agent $p_i=0.75$, we set
\[
\alpha_{i,0}= \kappa(1-p_i)=8\cdot 0.25=2,\qquad
\beta_{i,0}= \kappa p_i=8\cdot 0.75=6,
\]
so the prior $\mathrm{Beta}(2,6)$ is pessimistic despite $p_i>1/2$.
Conversely, for a weak agent $p_i=0.35$,
\[
\alpha_{i,0}= 8\cdot(1-0.35)=5.2,\qquad
\beta_{i,0}= 8\cdot 0.35=2.8,
\]
yielding an optimistic prior $\mathrm{Beta}(5.2,2.8)$. 


\textbf{Alg.~1 (AgentGateAndContribution).}
Fix horizon $T$ (with $n=T-1$ learning rounds), agent $i$'s prior
$(\alpha_{i,0},\beta_{i,0})$ (aligned or contrary), competence $p_i$, target level $p_{\mathrm{critical},i}$,
and abstention threshold $\tau_{\mathrm{abstain},i}$.
The routine enumerates all learning histories with $k\in\{0,\dots,n\}$ correct private
outcomes. For each $k$ it forms the posterior $\mathrm{Beta}(\alpha_{i,0}+k,\beta_{i,0}+n-k)$
and computes the agent's \textit{confidence}
$\mathcal C_{i,T}=1-I_{p_{\mathrm{critical},i}}(\alpha_{i,0}+k,\beta_{i,0}+n-k)$.
If $\mathcal C_{i,T}>\tau_{\mathrm{abstain},i}$ (or if publishing is forced), that
history counts as ``publish''. Weighting by the binomial likelihood
$\binom{n}{k}p_i^k(1-p_i)^{n-k}$ and summing over $k$ yields
$q_i=\mathbb E_\ost[D_{\mathrm{vote},i}(T)]$, the probability that agent $i$ will
\textit{publish} at time $T$ under the model. Multiplying by the margin $(2p_i-1)$ gives
the agent's expected contribution $c_i=(2p_i-1)q_i$ to the final net vote. The update
\(
q_i \leftarrow q_i + \mathbf{1}\{\text{publish at }k\}\binom{n}{k}p_i^k(1-p_i)^{n-k}
\)
implements the law of total probability:
\begin{align*}
&q_i=\mathbb{E}_\ast[D_{\mathrm{vote},i}(T)]
= \\ & \sum_{k=0}^{n}\underbrace{\mathbf{1}\{\mathcal C_{i,T}(k)>\tau_{\mathrm{abstain},i}\}}_{\mathbb{P}(\text{publish}\mid K_i=k)}
\underbrace{\binom{n}{k}p_i^k(1-p_i)^{n-k}}_{\mathbb{P}(K_i=k)},
\end{align*}
that is, we add the probability of each history that would vote publicly.

\begin{algorithm}[!htbp]
\SetKwInOut{Input}{In}\SetKwInOut{Output}{Out}
\caption{AgentGateAndContribution$(i)$}
\Input{$T,\,\alpha_{i,0},\beta_{i,0},\,p_i,\,p_{\mathrm{critical},i},\,\tau_{\mathrm{abstain},i}$; \texttt{force}$\in\{0,1\}$}
\Output{$q_i=\mathbb{E}_\ast[D_{\text{vote},i}(T)]$, \ $c_i=(2p_i-1)\,q_i$}
$n\leftarrow T-1$;\ $q_i\leftarrow 0$\;
\For{$k=0,\dots,n$}{
  $\alpha\_\mathrm{post}\leftarrow \alpha_{i,0}+k$;\quad $\beta\_\mathrm{post}\leftarrow \beta_{i,0}+n-k$\;
  $\text{conf}\leftarrow 1-\mathrm{BetaCDF}\!\big(p_{\mathrm{critical},i};\,\alpha\_\mathrm{post},\beta\_\mathrm{post}\big)$\;
  $\text{publish}\leftarrow (\texttt{force}=1)\ \lor\ (\text{conf}>\tau_{\mathrm{abstain},i})$\;
  $q_i\leftarrow q_i+\mathbf{1}\{\text{publish}\}\,\binom{n}{k}p_i^k(1-p_i)^{n-k}$\;
}
$c_i\leftarrow (2p_i-1)\,q_i$\;
\Return{$(q_i,c_i)$}
\end{algorithm}

\textbf{Alg.~2 (BoundFromContrib).}
Given the per–agent contributions $\{c_i\}$ and competences $\{p_i\}$, the routine
computes the numerator $S_{\mathrm{num}}=\sum_i c_i=\mathbb E_\ost[V_T^{\ost-\odg}]$
and the Azuma–Hoeffding denominator
$S_{\mathrm{den}}=\sum_i\big((T-1)(2p_i-1)^2+4\big)$, which collects the bounded
differences from the $n$ learning reveals and the final private vote (Lemma~\ref{lemmadifferences}).
It returns the lower bound
$1-\exp\!\big(-S_{\mathrm{num}}^2/(2S_{\mathrm{den}})\big)$ on the probability
that the final majority is correct. 

\begin{algorithm}[!htbp]
\SetKwInOut{Input}{In}\SetKwInOut{Output}{Out}
\caption{BoundFromContrib}
\Input{$T$;\ $\{p_i\}_{i=1}^N$;\ $\{c_i\}_{i=1}^N$ with $c_i=(2p_i-1)q_i$}
\Output{$\mathrm{LB}\ \approx\ \text{lower bound on }\mathbb{P}_\ast(V_T^{\ost-\odg}>0)$}
$S_{\mathrm{num}}\leftarrow \sum_{i=1}^N c_i$\;
$S_{\mathrm{den}}\leftarrow \sum_{i=1}^N \big((T-1)(2p_i-1)^2+4\big)$\;
$\mathrm{LB}\leftarrow 1-\exp\!\big(-S_{\mathrm{num}}^2/(2S_{\mathrm{den}})\big)$\;
\Return{$\mathrm{LB}$}
\end{algorithm}


\textbf{Alg.~3 (MonteCarloEvaluate).}
This routine simulates the full process end-to-end. In each run it initializes all
agents with either a shared prior $(\alpha_0,\beta_0)$ \textit{or} per–agent priors
$\{(\alpha_{i,0},\beta_{i,0})\}$ (e.g., contrary priors as above), then executes $n$ learning rounds.
In round $t\le n$, agent $i$ draws a private outcome (correct with probability $p_i$) and updates its Beta
counts accordingly. At the decision round $T$, each agent computes its posterior
confidence $1-I_{p_{\mathrm{critical}}}(\alpha_i,\beta_i)$ and publishes iff it
exceeds $\tau_{\mathrm{abstain}}$ (or publishing is forced). Publishing agents then cast
a final private vote ($+1$ w.p.\ $p_i$, $-1$ otherwise). The net vote is the sum of
published votes; a run is a \textit{win} if the net number of votes is strictly positive. Repeating over
$R$ runs yields the empirical success rate, referred to as $\widehat{\mathbb{P}}_\mathrm{emp}$.


\begin{algorithm}[!htbp]
\SetKwInOut{Input}{In}\SetKwInOut{Output}{Out}
\caption{MonteCarloEvaluate}
\Input{$N,T,\,p_{\mathrm{critical}},\tau_{\mathrm{abstain}}$; list $\{p_i\}_{i=1}^N$; \texttt{force}$\in\{0,1\}$; runs $R$; \\
\textit{Either} shared prior $(\alpha_0,\beta_0)$ \textit{or} per–agent priors $\{(\alpha_{i,0},\beta_{i,0})\}$}
\Output{$\widehat{\mathbb{P}}_\mathrm{emp}(V_T^{\ost-\odg}>0)$}
$\text{wins}\leftarrow 0$;\quad $n\leftarrow T-1$\;
\For{$r=1,\dots,R$}{
  \tcc{learning phase}
  \For{$i=1,\dots,N$}{
    $(\alpha_i,\beta_i)\leftarrow
    \begin{cases}
      (\alpha_{i,0},\beta_{i,0}) & \text{if per–agent priors given}\\
      (\alpha_0,\beta_0) & \text{otherwise}
    \end{cases}$\;
  }
  \For{$t=1,\dots,n$}{
    \For{$i=1,\dots,N$}{
      $s\sim\mathrm{Bernoulli}(p_i)$;\quad
      $(\alpha_i,\beta_i)\leftarrow(\alpha_i+1,\beta_i)$ if $s{=}1$ else $(\alpha_i,\beta_i+1)$\;
    }
  }
  \tcc{decision phase}
  $\text{net}\leftarrow 0$\;
  \For{$i=1,\dots,N$}{
    $\text{conf}\leftarrow 1-\mathrm{BetaCDF}\!\big(p_{\mathrm{critical}};\alpha_i,\beta_i\big)$\;
    $\text{publish}\leftarrow (\texttt{force}=1)\ \lor\ (\text{conf}>\tau_{\mathrm{abstain}})$\;
    \If{$\text{publish}$}{
      $x_T \leftarrow +1 \quad\text{with probability } p_i \text{ else } -1$; \: $\text{net}\leftarrow \text{net}+x_T$\;
    }
  }
  $\text{wins}\leftarrow \text{wins} + \mathbf{1}\{\text{net}>0\}$\;
}
\Return{$\text{wins}/R$}
\end{algorithm}

\clearpage

\bibliographystyle{plainnat}
\bibliography{references}

\end{document}